\def\shiftname{invariant covariate shift}
\def\eqref#1{equation~\ref{#1}}
\def\1{\bm{1}}
\def\rvw{{\mathbf{w}}}
\def\rvx{{\mathbf{x}}}
\DeclareMathAlphabet{\mathsfit}{\encodingdefault}{\sfdefault}{m}{sl}
\SetMathAlphabet{\mathsfit}{bold}{\encodingdefault}{\sfdefault}{bx}{n}
\def\gP{{\mathcal{P}}}
\def\gR{{\mathcal{R}}}
\newcommand{\E}{\mathbb{E}}
\newcommand{\Var}{\mathrm{Var}}
\DeclareMathOperator*{\argmax}{arg\,max}
\newcommand*{\indep}{%
  \mathbin{%
    \mathpalette{\@indep}{}%
  }%
}
\newcommand*{\nindep}{%
  \mathbin{%                   % The final symbol is a binary math operator
    \mathpalette{\@indep}{\not}% \mathpalette helps for the adaptation
                               % of the symbol to the different math styles.
  }%
}
\newcommand*{\@indep}[2]{%
  % #1: math style
  % #2: empty or \not
  \sbox0{$#1\perp\m@th$}%        box 0 contains \perp symbol
  \sbox2{$#1=$}%                 box 2 for the height of =
  \sbox4{$#1\vcenter{}$}%        box 4 for the height of the math axis
  \rlap{\copy0}%                 first \perp
  \dimen@=\dimexpr\ht2-\ht4-.2pt\relax
      % The equals symbol is centered around the math axis.
      % The following equations are used to calculate the
      % right shift of the second \perp:
      % [1] ht(equals) - ht(math_axis) = line_width + 0.5 gap
      % [2] right_shift(second_perp) = line_width + gap
      % The line width is approximated by the default line width of 0.4pt
  \kern\dimen@
  {#2}%
      % {\not} in case of \nindep;
      % the braces convert the relational symbol \not to an ordinary
      % math object without additional horizontal spacing.
  \kern\dimen@
  \copy0 %                       second \perp
} 
\newcommand{\expneg}[2]{{#1}\text{e-}{#2}}
\newcommand{\exppos}[2]{{#1}\text{e}{#2}}
\newsavebox{\measurebox}
\theoremstyle{plain}
\newcounter{nDefinitions}
\theoremstyle{definition}
\newtheorem{definition}[nDefinitions]{Definition}
\newcounter{nAssumptions}
\theoremstyle{definition}
\newtheorem{assumption}[nAssumptions]{Assumption}
\DeclareMathOperator{\Risk}{\mathcal{R}}
\DeclareMathOperator{\x}{\mathbf{x}}
\DeclareMathOperator{\w}{\mathbf{w}}
\DeclareMathOperator{\T}{\mathsf{T}}
\renewcommand{\vec}[1]{\mathbf{#1}}%
\newcommand{\EV}[2]{\operatorname{\mathbb{E}}_{#1} \left[ #2 \right]}
\tikzset{
    -Latex,auto,node distance =1 cm and 1 cm,semithick,
    state/.style ={ellipse, draw, minimum width = 0.7 cm},
    point/.style = {circle, draw, inner sep=0.04cm,fill,node contents={}},
    bidirected/.style={Latex-Latex,dashed},
    el/.style = {inner sep=2pt, align=left, sloped}
}
\newcommand{\cmark}{\ding{51}}% checkmark
\newcommand{\xmark}{\ding{55}}% crossmark
\definecolor{mygreen}{rgb}{0.2039, 0.698, 0.2}
\newcommand{\greencmark}{\textcolor{mygreen}\cmark}
\newcommand{\redxmark}{\textcolor{red}\xmark}
\title{Weighted Risk Invariance: Domain Generalization
\\under Invariant Feature Shift}
\author{\name Gina Wong \email gwong15@jhu.edu\\
      \addr Department of Computer Science\\
      Johns Hopkins University
      \AND
      \name  Joshua Gleason\email gleason@umd.edu \\
      \addr Department of Electrical and Computer Engineering\\
      University of Maryland, College Park
      \AND
      \name  Rama Chellappa \email rchella4@jhu.edu\\
       \addr Department of Electrical and Computer Engineering\\
      Johns Hopkins University
      \AND Yoav Wald \email yoav.wald@nyu.edu\\
      \addr Center for Data Science\\
      New York University
      \AND Anqi Liu \email aliu.cs@jhu.edu\\
      \addr Department of Computer Science\\
      Johns Hopkins University}
\begin{document}

\maketitle

\begin{abstract}
Learning models whose predictions are invariant under multiple environments is a promising approach for out-of-distribution generalization. Such models are trained to extract features $X_{\text{inv}}$ where the conditional distribution $Y \mid X_{\text{inv}}$ of the label given the extracted features does not change across environments. Invariant models are also supposed to generalize to shifts in the marginal distribution $p(X_{\text{inv}})$ of the extracted features $X_{\text{inv}}$, a type of shift we call an \emph{\shiftname{}}. However, we show that proposed methods for learning invariant models underperform under \shiftname{}, either failing to learn invariant models---even for data generated from simple and well-studied linear-Gaussian models---or having poor finite-sample performance. 
To alleviate these problems, we propose \textit{weighted risk invariance} (WRI). Our framework is based on imposing invariance of the loss across environments subject to appropriate reweightings of the training examples. We show that WRI provably learns invariant models, i.e.\ discards spurious correlations, in linear-Gaussian settings. We propose a practical algorithm to implement WRI by learning the density $p(X_{\text{inv}})$ and the model parameters simultaneously, and we demonstrate empirically that WRI outperforms previous invariant learning methods under \shiftname{}.
\end{abstract}

% ----------------------------------------------------------------------------------
\section{Introduction}
Although traditional machine learning methods can be incredibly effective, many are based on the \textit{i.i.d. assumption} that the training and test data are independent and identically distributed. As a result, these methods are often brittle to distribution shift, failing to generalize training performance to \textit{out-of-distribution} (OOD) data \citep{torralba2011unbiased, beery2018recognition, hendrycks2019benchmarking, geirhos2020shortcut}. Distribution shifts abound in the real world: as general examples, we encounter them when we collect test data under different conditions than we collect the training data \citep{adini1997face, huang2006correcting}, or when we train on synthetic data and evaluate on real data \citep{chen2020automated, beery2020synthetic}. Learning to generalize under distribution shift, then, is a critical step toward practical deployment.

To achieve generalization, a typical setup is to use multiple training environments with the hope that, if the model learns what is invariant across the training environments, it can leverage that invariance on an unseen test environment as well. An extensive line of research therefore focuses on learning statistical relationships that are invariant across training environments \citep{ganin2016domain, tzeng2017adversarial, li2021learning}. Of these, many works focus on learning an invariant relationship between the data $X$ and labels $Y$, as such relationships are posited to result from invariant causal mechanisms \citep{pearl1995causal, scholkopf2012causal, peters2016causal, rojas2018invariant}. Applying this idea to representation learning, several works have found that learning features $\Phi(X)$ such that the label conditional distribution $p(Y|\Phi(X))$ is invariant across training environments is an effective method for improving generalization to unobserved test environments \citep{arjovsky2019invariant, krueger2021out, wald2021calibration, eastwood2022probable}.

Predictors that rely on conditionally invariant features can be seen as an optimal solution to the problem of OOD generalization \citep{koyama2020out}. However, we demonstrate that existing methods for finding these predictors can struggle or even fail when the conditionally invariant features undergo covariate shift, a type of shift we call \textit{\shiftname{}}. We show that \shiftname{} introduces two types of challenges: first, it leads risk invariance objectives (such as \citet{krueger2021out}) to incorrectly identify the conditionally invariant features in a heteroskedastic setting,\footnote{That is, where classification is more difficult in some instances than others.} and second, it adversely impacts the sample complexity \citep{shimodaira2000improving} of recovering an optimal invariant predictor (in the style of \citet{arjovsky2019invariant}).
For an illustration of these failure cases, see the learned predictors for VREx and IRM respectively in Figure \ref{fig:linear_marginal_differs}.

\begin{figure}[t]
    \centering
    \stackunder[0pt]
    {\includegraphics[clip, trim=0 0.5em 0 0, width=1\linewidth]{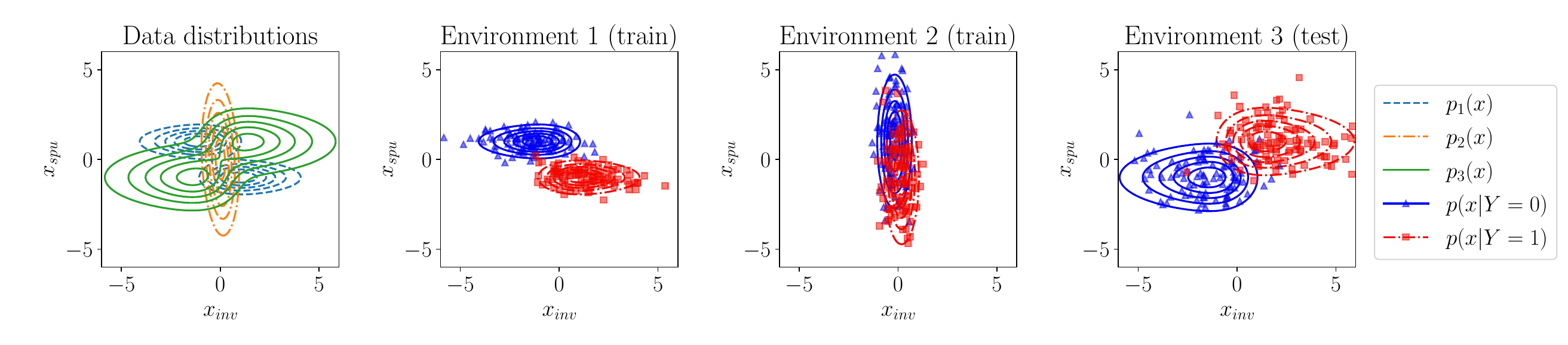}}
    {\includegraphics[clip, trim=0 0.5em 0 0, width=1\linewidth]{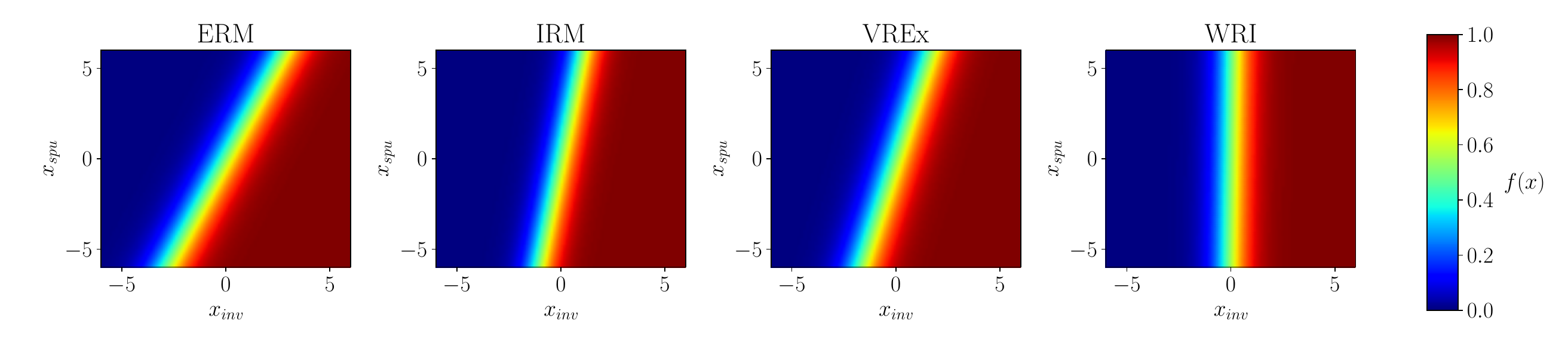}}
    \caption{Top row: a Gaussian setup exhibiting both heteroskedasticity and \shiftname. Note that for invariant features $x_{inv}$, $p(y|x_{inv})$ is the same across environments, while $p(y|x_{spu})$ changes across environments, following our causal model. Bottom row: we visualize the performance of different algorithms under this setup, and find the WRI objective recovers a more invariant predictor than the other algorithms. Specifically, the WRI predictor bases its predictions more on the invariant features $x_{inv}$ and less on the spurious features $x_{spu}$, so it learns a decision boundary that is vertical in this case. The full parameters for this simulation can be found in Appendix~\ref{appendix:experiments}.}
    \label{fig:linear_marginal_differs}
\end{figure}

In this work, we demonstrate how to mitigate the effects of \shiftname{} by accounting for the underlying structure of the data in our learning process. We propose Weighted Risk Invariance (WRI), where we enforce invariance of a model's reweighted losses across training environments. We show that under appropriate choices of weights, models satisfying WRI provably achieve OOD generalization by learning to reject spurious correlations, under a common model for learning under spurious correlations \citep{arjovsky2019invariant, rosenfeld2020risks, wald2021calibration}. To learn models that satisfy WRI in practice, we introduce a method that alternates between estimation of reweightings (corresponding to densities in our implementation) and optimization of a prediction loss penalized by an empirical estimate of WRI. We show that solving for WRI recovers invariant predictors, even in the difficult setting of heteroskedasticity and \shiftname{}, where popular previously proposed invariant learning principles fail (Figure \ref{fig:linear_marginal_differs}). Finally, we show that under \shiftname{}, the WRI solution outperforms widely used baselines.

In summary, our main contributions are:
\begin{itemize}[leftmargin=13pt,topsep=0pt,itemsep=0mm]
    \item We introduce an invariant learning method that uses weighted risk invariance to tackle \shiftname{}. We formally show that enforcing weighted risk invariance recovers an invariant predictor under a linear causal setting (Prop. \ref{prop:WRI_training}, Thm. \ref{theorem:WRI_OOD}).
    \item We propose WRI, an optimization problem for enforcing weighted risk invariance, and provide an algorithm to solve WRI in practice via alternating minimization (\S \ref{sec:implementation}).
    \item We verify the efficacy of our approach with experiments on simulated and real-world data. We find that, under \shiftname{}, our method outperforms widely used baselines on benchmarks like ColoredMNIST and the DomainBed datasets. Moreover, our learned densities report when conditionally invariant features are rare in training, making them useful for downstream tasks like OOD detection (\S \ref{sec:experiments}).
\end{itemize}

% ----------------------------------------------------------------------------------
\section{Preliminaries}
\label{sec:preliminaries}
\subsection{Domain generalization}
Domain generalization was first posed \citep{blanchard2011generalizing, muandet2013domain} as the problem of learning some invariant relationship from multiple training domains/environments $\mathcal{E}_{tr} = \{e_1, \dots, e_k\}$ that we assume to hold across the set of all possible environments we may encounter $\mathcal{E}$. Each training environment $E=e$ consists of a dataset $D^e = \{(\x^e_i, y^e_i)\}^{n_e}_{i=1}$, and we assume that data pairs $(\x^e_i, y^e_i)$ are sampled i.i.d from distributions $p_e(X^e, Y^e)$ with $\x \in \mathcal{X}$ and $y \in \mathcal{Y}$.\footnote{When the context is clear, we write the data-generating random vector $X$, output random variable $Y$, and their realizations without the environment superscript.}

For loss function $\ell: \hat{\mathcal{Y}} \times \mathcal{Y} \rightarrow \mathbb{R}$, we define the statistical \textit{risk} of a predictor $f: \mathcal{X} \rightarrow \hat{\mathcal{Y}}$ over an environment $e$ as 
\begin{equation} \label{eq:statistical_risk}
    \Risk^e(f) = \EV{p_e(X^e, Y^e)}{\ell(f(\x^e), y^e)},
\end{equation}
and its empirical realization as
\begin{equation} \label{eq:empirical_risk}
    \frac{1}{|D^e|} \sum^{|D^e|}_{i=1} \ell(f(\x^e_i), y^e_i).
\end{equation}
Empirical Risk Minimization (ERM) \citep{vapnik1991principles} aims to minimize the average loss over all of the training samples
\begin{equation} \label{ERM}
    \mathcal{R}_{ERM}(f) = \frac{1}{|\mathcal{E}_{tr}|} \sum_{e \in \mathcal{E}_{tr}} \Risk^e(f).
\end{equation}
Unfortunately, ERM fails to capture distribution shifts across training environments \citep{arjovsky2019invariant, krueger2021out}, and so can fail catastrophically depending on how those distribution shifts extend to the test data. (Previous works point to cases where ERM successfully generalizes, but the reason is not well understood \citep{vedantam2021empirical, gulrajani2020search}, and its success does not hold across all types of distribution shifts \citep{ye2022ood, wiles2021fine}.) In our work, we therefore aim to learn a predictor that captures the underlying causal mechanisms instead, under the assumption that such mechanisms are invariant across the training and test environments.

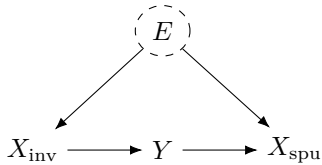
\begin{figure} 
    \centering
    \adjustbox{max height=7em}{
    \begin{tikzpicture} 
        \node (1) at (0,0) {$X_{\text{inv}}$};
        \node (2) [right = of 1] {$Y$};
        \node (3) [right = of 2] {$X_{\text{spu}}$};
        \node[state, dashed] (4) [above = of 2] {$E$};
        % \node (5) [below = of 2] {$X$};

        \path (1) edge (2);
        \path (2) edge (3);
        \path (4) edge (3);
        \path (4) edge (1);
        % \path (1) edge (5);
        % \path (3) edge (5);
    \end{tikzpicture}}
    \caption{Causal graph depicting our data-generating process. The environment $E$ is dashed to emphasize it takes on unobserved values at test time.}
    \label{fig:scm}
\end{figure}

\subsection{Our causal model}
We assume that input $X$ contains both causal/invariant and spurious components such that there exists a deterministic mechanism to predict the label $Y$ from invariant components $X_{\text{inv}}$, while the relationship between spurious components $X_{\text{spu}}$ and label $Y$ depends on the environment. More explicitly, we say that $Y \nindep E \mid X_{\text{spu}}$. The observed features $X$ are a function of the invariant and spurious components, or $X = g(X_{\text{inv}}, X_{\text{spu}})$, which we assume is injective so that there exists $g^{\dag}(\cdot)$ that recovers $X_{\text{spu}}, X_{\text{inv}}$ almost surely.
We represent this data-generating process with the causal graph in Figure \ref{fig:scm}. From the graph, we observe that $p_e(Y \mid X_{\text{inv}}) = p_{\tilde{e}}(Y \mid X_{\text{inv}})$ for any $e,\tilde{e}\in{\mathcal{E}}$, i.e.\ it is fixed across environments. On the other hand, the distribution $p_e(X_{\text{spu}} \mid Y)$ is not fixed across various $e\in{\mathcal{E}}$ and we assume it may change arbitrarily.
We summarize these assumptions below.

\begin{assumption}[Causal Prediction Coupling] \label{assumption:setting}
A causal prediction coupling is a set of distributions $\gP = \{p_e\}_{e}$, where each $p_e\in{\gP}$ is a distribution over real vectors $X=(X_{\text{inv}}, X_{\text{spu}})$ and target $Y$ that decomposes according to
\begin{align*}
p_{e}(X, Y) = p_e(X_{\text{inv}})p(Y \mid X_{\text{inv}})p_e(X_{\text{spu}} \mid Y).
\end{align*}
We call $X_{\text{inv}}$ and $X_{\text{spu}}$ the invariant and spurious components accordingly. The distribution $p(Y \mid X_{\text{inv}})$ is fixed across all members of $\gP$.
\end{assumption}

Intuitively, we seek a robust predictor $f$ where $\gR^e(f)$ does not grow excessively high for any $p_e\in{\gP}$. Let us focus on the square and logistic losses, so an optimal model with respect to $p_e$ predicts $Y$ according to its conditional expectation $\E_{p_e}[Y \mid X]$. Focusing on couplings where $p_{e}(X_{\text{spu}} \mid Y)$ can undergo large changes, we observe that a guarantee on the worst case loss can only be obtained when $f(\rvx)$ does not depend on $X_{\text{spu}}$. This can be shown formally for the case of linear regression, by showing that any model $f(\rvx) = \rvw^\top \rvx$ where $\rvw_{\text{spu}}\neq 0$ can incur an arbitrarily high loss under some $p_e\in{\gP}$ \citep{arjovsky2019invariant, rosenfeld2020risks}. Thus, we wish to learn a predictor that satisfies $f(X) = f(X_{\text{inv}})$ almost everywhere. 

Restricted to such predictors, we observe two things. First, there is a single model that is simultaneously optimal for all $\gP$, and this model is the predictor we would like to learn. Second, the worst-case loss of this predictor is controlled by the loss of the most difficult instances in terms of $X_{\text{inv}}$, since $p_e(X_{\text{inv}})$ can shift mass to these instances where the predictor incurs the highest loss. Following these arguments, we arrive at definitions of a spurious-free predictor and an invariant predictor.
\begin{definition} \label{def:invariant_predictor}
A measurable function $f: \mathcal{X} \rightarrow \hat{\mathcal{Y}}$ is a \textit{spurious-free predictor} with respect to a causal prediction coupling $\gP$ if $f(X) = f(X_{\text{inv}})$ almost everywhere. Considering the square or logisitic losses, we call the optimal spurious-free predictor $f^*(\rvx) = \E{\left[Y \mid X_{\text{inv}}=\rvx_{\text{inv}}\right]}$ the \textit{invariant predictor} for short.
\end{definition}

Invariant predictors, or representations, are usually defined as models that satisfy $\EV{p_e}{Y \mid f(X)} = \EV{p_{e'}}{Y \mid f(X)}$ for any $e,e'\in{\mathcal{E}'}$ \citep{arjovsky2019invariant}.\footnote{Demanding equality in distribution and not just expectation is also common.} For the settings we study in this paper, the invariant predictor as defined in Definition~\ref{def:invariant_predictor} satisfies this property.

\subsection{Invariant learning under covariate shift} \label{sec:invariant_learning}
Causal predictors assume there exist features $X_{\text{inv}} = \Phi(X)$ such that the mechanism that generates the target variables $Y$ from $\Phi(X)$ is invariant under distribution shift; in other words, the conditional distribution $p(Y|\Phi(X))$ is invariant across environments \citep{scholkopf2012causal, peters2016causal}. This is similar to covariate shift, where the conditional distribution $p(Y|X)$ is assumed to be the same between training and test while the training and test distributions $p_{tr}(X)$ and $p_{te}(X)$ differ. However, while the covariate shift assumption refers to distribution shift in the covariates $X$, we study the case of covariate shift in the conditionally invariant features $X_{\text{inv}}$, which we call \textit{\shiftname{}}. For simplicity, we will also call conditionally invariant features simply \textit{invariant} when the context is clear, using terminology consistent with previous works \citep{liu2021heterogeneous, ahuja2021invariance}.

As a motivating example, consider the case where we have patient demographics $X$ and diagnostic information $Y$ from multiple hospitals. Some demographic features, like whether a patient smokes or not, may have a direct influence on a diagnosis like lung cancer. We can model these features as invariant features $X_{inv}$. Under invariant covariate shift, although the distribution of smokers $p(X_{inv})$ can differ from hospital to hospital, the relationship $p(Y|X_{inv})$ remains the same. Unlike regular covariate shift, which applies to the entire covariate, the assumption here is more targeted: under regular covariate shift, we assume that the conditional distribution given all the patient demographics $p(Y|X)$ is the same across hospitals. In contrast, the assumption that only $p(Y|X_{inv})$ is the same across hospitals is more realistic.

Several popular works search for proxy forms of invariance in an attempt to recover the invariant features from complex, high-dimensional data. Methods along this line include Risk Extrapolation (REx) \citep{krueger2021out} and Invariant Risk Minimization (IRM) \citep{arjovsky2019invariant}.
Unfortunately, these methods are adversely impacted by \shiftname{}, as we demonstrate in \S \ref{sec:comparison}. We will see there are two types of challenges, namely: (1) the failure to learn an invariant predictor, even given infinite samples---this is where methods like REx fail, even though they have favorable performance when there is no \shiftname{}; and (2) finite sample performance, where methods like IRM underperform when there is \shiftname{}, even though they are guaranteed to learn an invariant predictor as the amount of data grows to infinity.

A classical approach to counteract the effects of covariate shift is importance weighting \citep{shimodaira2000improving}, where the training loss is reweighted with a ratio of test to training densities so that training better reflects the test distribution. Generalizing this concept, \citet{martin2023double} finds there are many choices of weights that can help align the training and test distributions; that is, there exist many choices of weights $\alpha(\x)$ and $\beta(\x)$ such that
\begin{equation}
\label{eq:double_weighting}
    \EV{p_{te}(X, Y)}{\alpha(\x) \ell(f(\x), y)} = \EV{p_{tr}(X, Y)}{\beta(\x)  \ell(f(\x), y)}.
\end{equation}
Note that this equation reduces to importance weighting under \textit{density ratio weighting}, or when $\beta(\x) = p_{te}(\x)/p_{tr}(\x)$ and $\alpha(\x) = 1$. Under this more general framework, \textit{density weighting}, where $\beta(\x) = p_{te}(\x)$ and $\alpha(\x) = p_{tr}(\x)$, achieves a similar effect.

Our work studies the question of how to adapt reweighting to the case of invariant learning. Unfortunately, a naive choice of weights, where the covariates of one environment are reweighted to match the distribution of the other (i.e.\ importance weighting), will not provide the desired result in the scenario of invariant learning.\footnote{This occurs because an invariant model on the original distribution may become not-invariant under the weighted data, where distributions of covariates are matched across environments. It is easy to see this with simple examples.} In what follows, we show that for reweighting to be effective in this setting, it needs to explicitly account for the density of $X_{\text{inv}}$.

% ----------------------------------------------------------------------------------
\section{Density-aware generalization}
\subsection{Invariant feature density weighting}
\label{sec:WRI}
Consider choosing some set of weights $\alpha(\rvx_{\text{inv}}), \beta(\rvx_{\text{inv}})$ such that $\alpha(\x_{\text{inv}}) p_{e_i}(\x_{\text{inv}}) = \beta(\x_{\text{inv}}) p_{e_j}(\x_{\text{inv}})$ \citep{martin2023double}. This defines a pseudo-distribution $q(\rvx_{\mathrm{inv}})\propto \alpha(\rvx_{\text{inv}})p_{e_i}(\rvx_{\text{inv}})$ over the invariant features. It turns out that for spurious-free models $f(\rvx)$, even when we reweigh the entire joint distribution $p_{e_i}(X, Y)$ instead of the marginal distribution over $X_{\text{inv}}$ alone, the expected loss will still be equal to that over $q(\rvx_{\mathrm{inv}})$ (up to a normalizing constant). This also holds when we do the reweighting on another environment, which leads us to the following result proved in the appendix. 

\begin{restatable}{proposition}{traininvariant}
\label{prop:WRI_training}
Let Assumption \ref{assumption:setting} hold over a set of environments $\mathcal{E}\subseteq{\gP}$. If a predictor $f$ is spurious-free over $\mathcal{E}$, then for every pair of environments $e_i, e_j \in \mathcal{E}$, their weighted risks are equal if their respective weighting functions $\alpha$ and $\beta$ obey $\alpha(\x_{\text{inv}}) p_{e_i}(\x_{\text{inv}}) = \beta(\x_{\text{inv}}) p_{e_j}(\x_{\text{inv}})$.
\end{restatable}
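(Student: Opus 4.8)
The plan is to expand both weighted risks as integrals against the factorized densities of Assumption~\ref{assumption:setting}, marginalize out the spurious component using spurious-freeness, apply the reweighting identity pointwise, and then reverse the marginalization in the other environment. Throughout, take the weighted risk of $f$ on environment $e$ under weight $w$ to be $\Risk^e_w(f) = \EV{p_e(X,Y)}{w(\x_{\text{inv}})\,\ell(f(\x),y)}$; the claim is that $\Risk^{e_i}_\alpha(f) = \Risk^{e_j}_\beta(f)$ whenever $\alpha(\x_{\text{inv}})\,p_{e_i}(\x_{\text{inv}}) = \beta(\x_{\text{inv}})\,p_{e_j}(\x_{\text{inv}})$.

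First I would use injectivity of $g$ to change variables to $(\x_{\text{inv}},\x_{\text{spu}},y)$ and, invoking the decomposition $p_{e_i}(X,Y) = p_{e_i}(X_{\text{inv}})\,p(Y\mid X_{\text{inv}})\,p_{e_i}(X_{\text{spu}}\mid Y)$, write $\Risk^{e_i}_\alpha(f) = \int \alpha(\x_{\text{inv}})\, p_{e_i}(\x_{\text{inv}})\, p(y\mid \x_{\text{inv}})\, p_{e_i}(\x_{\text{spu}}\mid y)\, \ell(f(\x),y)\, d\x_{\text{inv}}\, d\x_{\text{spu}}\, dy$. Since $f$ is spurious-free over $\mathcal{E}$, the identity $\ell(f(\x),y) = \ell(f(\x_{\text{inv}}),y)$ holds $p_{e_i}$-almost everywhere, so up to a null set the integrand depends on $\x_{\text{spu}}$ only through the conditional density $p_{e_i}(\x_{\text{spu}}\mid y)$; integrating that factor out to $1$ collapses the expression to $\int \alpha(\x_{\text{inv}})\, p_{e_i}(\x_{\text{inv}})\, p(y\mid \x_{\text{inv}})\, \ell(f(\x_{\text{inv}}),y)\, d\x_{\text{inv}}\, dy$. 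Note that the weights $\alpha,\beta$ pass cleanly through this step precisely because they are functions of $\x_{\text{inv}}$ alone, hence constant in $\x_{\text{spu}}$.

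Next I would substitute the reweighting hypothesis $\alpha(\x_{\text{inv}})p_{e_i}(\x_{\text{inv}}) = \beta(\x_{\text{inv}})p_{e_j}(\x_{\text{inv}})$ inside the integral, obtaining $\int \beta(\x_{\text{inv}})\, p_{e_j}(\x_{\text{inv}})\, p(y\mid \x_{\text{inv}})\, \ell(f(\x_{\text{inv}}),y)\, d\x_{\text{inv}}\, dy$, where the conditional factor $p(y\mid\x_{\text{inv}})$ is unambiguous because it is fixed across all members of $\gP$ by the last clause of Assumption~\ref{assumption:setting}. Finally I would run the first step in reverse for $e_j$: reintroduce the factor $p_{e_j}(\x_{\text{spu}}\mid y)$ (which integrates to $1$ over $\x_{\text{spu}}$ for each $y$), re-apply $\ell(f(\x_{\text{inv}}),y) = \ell(f(\x),y)$, now $p_{e_j}$-a.e., recombine $p_{e_j}(\x_{\text{inv}})\,p(y\mid\x_{\text{inv}})\,p_{e_j}(\x_{\text{spu}}\mid y) = p_{e_j}(\x,y)$ via the change of variables through $g$, and recognize the result as $\EV{p_{e_j}(X,Y)}{\beta(\x_{\text{inv}})\ell(f(\x),y)} = \Risk^{e_j}_\beta(f)$.

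The computation is short, so the only care needed is measure-theoretic bookkeeping: that the change of variables through $g$ is legitimate (it is, by injectivity of $g$ and the existence of $g^\dagger$), that the spurious-free identity $f(\x)=f(\x_{\text{inv}})$ may be moved under the integral because it holds almost everywhere with respect to each of $p_{e_i}$ and $p_{e_j}$ (so the exceptional null sets contribute nothing), and that the reweighting identity is assumed on the relevant supports so the pointwise substitution is valid. I do not anticipate a genuine obstacle; the one point worth stressing is that the two marginalization steps must use the matching environment's conditional $p_e(\x_{\text{spu}}\mid y)$, which is exactly why spurious-freeness is essential — without it the $\x_{\text{spu}}$-integral would not decouple from the loss, and the environment-specific $p_e(\x_{\text{spu}}\mid y)$, which may change arbitrarily, would remain entangled and break the equality.
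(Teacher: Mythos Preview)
Your proposal is correct and follows essentially the same route as the paper: both arguments marginalize out $\x_{\text{spu}}$ using spurious-freeness of $f$ (and that $\alpha,\beta$ depend only on $\x_{\text{inv}}$), reduce each weighted risk to an integral in $(\x_{\text{inv}},y)$ involving the environment-invariant conditional $p(y\mid\x_{\text{inv}})$, and then match the two via the reweighting identity $\alpha\,p_{e_i}=\beta\,p_{e_j}$. Your write-up is slightly more explicit about the change of variables through $g$ and the a.e.\ bookkeeping, but the substance is the same.
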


This result shows that this form of weighted risk invariance is a signature of spurious-free prediction, and it is easy to verify that the optimal invariant predictor also satisfies this invariance. We formalize this notion of invariance with the following definition. 

\begin{definition}
We define weighted risk invariance between two environments $e_i$ and $e_j$ to mean that
\begin{equation*}
\EV{p_{e_i}(X^{e_i}, Y^{e_i})}{\alpha(\x_{\text{inv}}) \ell(f(\x), y)} = \EV{p_{e_j}(X^{e_j}, Y^{e_j})}{\beta(\x_{\text{inv}}) \ell(f(\x), y)},
\end{equation*}
where $\alpha(\x_{\text{inv}}) p_{e_i}(\x_{\text{inv}}) = \beta(\x_{\text{inv}}) p_{e_j}(\x_{\text{inv}})$. Weighted risk invariance over a set of more than two environments means that weighted risk invariance holds for all pairwise combinations of environments in that set. When one of the weighting functions is a density ratio of the invariant features, i.e.\ when either $\alpha(\x_{\text{inv}}) = p_{e_j}(\x_{\text{inv}})/p_{e_i}(\x_{\text{inv}})$ and $\beta(\x_{\text{inv}}) = 1$, or $\alpha(\x_{\text{inv}}) = 1$ and $\beta(\x_{\text{inv}}) = p_{e_i}(\x_{\text{inv}})/p_{e_j}(\x_{\text{inv}})$, we say this is \textit{density ratio weighting}. When the weighting functions are invariant feature densities, i.e.\ when $\alpha(\x_{\text{inv}}) = p_{e_j}(\x_{\text{inv}})$ and $\beta(\x_{\text{inv}}) = p_{e_i}(\x_{\text{inv}})$, we say this is \textit{density weighting}.
\end{definition}

However, weighted risk invariance alone is not enough to guarantee that a predictor is spurious-free. As an example, the case where $\alpha(\x_{\text{inv}}) = \beta(\x_{\text{inv}}) = 0$ allows any predictor to satisfy weighted risk invariance in a trivial manner, without excluding spurious features. To avoid these degenerate cases, we assume that the reweighted environments are in general position.\footnote{See \cref{appendix:proofs} for the full definition of general position.} Then, as in previous work on principled methods for learning invariant models \citep{arjovsky2019invariant, krueger2021out, wald2021calibration}, we prove that predictors with weighted risk invariance discard spurious features $X_{\text{spu}}$ for OOD generalization under general position conditions. We show this for the case of linear regression, where we have the following data-generating process parameterized by $\w^*_{inv}$, $\sigma_y$, and $\mu_i$, $\Sigma_i$ for training environment $i\in[k]$: 
\begin{equation}
\label{eq:regression_setting}
\begin{aligned}
    Y = \w^{*\top}_{\text{inv}} X_{\text{inv}} + \varepsilon, ~\varepsilon \sim \mathcal{N}(0, \sigma^2_y) \\
    X_{\text{spu}} = \mu_i Y + \eta, ~\eta \sim \mathcal{N}(0, \Sigma_i).
\end{aligned}
\end{equation} \vspace{-0.5em}
\begin{restatable}{theorem}{OODinvariant}
\label{theorem:WRI_OOD}
Consider a regression problem following the data generating process of \eqref{eq:regression_setting}. Let $\mathcal{E}$ be a set of environments with $| \mathcal{E} | > d_{\text{spu}}$ that satisfies general position. A linear regression model $f(\x) = \w^{\T} \x$ with $\w = [\w_{\text{inv}}, \w_{\text{spu}}]$ that satisfies weighted risk invariance w.r.t the squared loss must also satisfy $\w_{\text{spu}} = 0$. For density ratio weighting, general position holds with probability 1.
\end{restatable}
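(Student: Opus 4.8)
The plan is to use the Gaussian structure of \eqref{eq:regression_setting} to turn weighted risk invariance into a finite system of polynomial equations in the predictor's coordinates, and then close with a dimension count that uses $|\mathcal{E}| > d_{\text{spu}}$ together with general position. Write $f(\x) = \w_{\text{inv}}^\top \x_{\text{inv}} + \w_{\text{spu}}^\top \x_{\text{spu}}$, set $\delta = \w_{\text{inv}} - \w^*_{\text{inv}}$, and for an environment $e$ let $c_e = \mu_e^\top \w_{\text{spu}}$. Substituting $\x_{\text{spu}} = \mu_e y + \eta$ and $y = \w^{*\top}_{\text{inv}} \x_{\text{inv}} + \varepsilon$ gives $f(\x) - y = (\delta + c_e \w^*_{\text{inv}})^\top \x_{\text{inv}} + (c_e - 1)\varepsilon + \w_{\text{spu}}^\top \eta$. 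Conditioning on $\x_{\text{inv}}$ and integrating out the independent zero-mean Gaussian noises kills all cross terms, so for any weight $\alpha$ depending only on $\x_{\text{inv}}$,
\[ \mathcal{R}^{e}_{w}(\w) = \int \alpha(\x_{\text{inv}})\Big[\big((\delta + c_e \w^*_{\text{inv}})^\top \x_{\text{inv}}\big)^2 + (c_e-1)^2 \sigma_y^2 + \w_{\text{spu}}^\top \Sigma_e \w_{\text{spu}}\Big] p_e(\x_{\text{inv}})\, d\x_{\text{inv}}. \]

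Next I would use the reweighting to bring each pairwise comparison onto a common measure. The constraint $\alpha(\x_{\text{inv}}) p_{e_i}(\x_{\text{inv}}) = \beta(\x_{\text{inv}}) p_{e_j}(\x_{\text{inv}})$ makes both weighted risks integrals against the same pseudo-distribution $q_{ij} \propto \alpha\, p_{e_i}$; for density ratio weighting $q_{ij}$ is simply $p_{e_j}$ (or $p_{e_i}$, depending on orientation), with second-moment matrix $M_q = \E_{q}[\x_{\text{inv}} \x_{\text{inv}}^\top]$. Expanding the quadratic and cancelling the common $\delta^\top M_q \delta$ term, the equation $\mathcal{R}^{e_i}_{w} = \mathcal{R}^{e_j}_{w}$ collapses to a scalar identity of the shape
\[ B_{ij}\, c_i^2 + 2 A_{ij}\, c_i + \w_{\text{spu}}^\top \Sigma_{e_i} \w_{\text{spu}} \;=\; B_{ij}\, c_j^2 + 2 A_{ij}\, c_j + \w_{\text{spu}}^\top \Sigma_{e_j} \w_{\text{spu}}, \]
where $A_{ij} = \delta^\top M_q \w^*_{\text{inv}} - \sigma_y^2$ and $B_{ij} = \w^{*\top}_{\text{inv}} M_q \w^*_{\text{inv}} + \sigma_y^2 > 0$ depend on the pair (through the reference moments) but not on $\w_{\text{spu}}$. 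Collecting these identities over all pairs — and, for density ratio weighting, over both orientations of each pair — yields a system in which the environment index enters $\w_{\text{spu}}$ only through the linear forms $c_e = \mu_e^\top \w_{\text{spu}}$ and the quadratic forms $\w_{\text{spu}}^\top \Sigma_e \w_{\text{spu}}$.

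The core step is to show this system forces $\w_{\text{spu}} = 0$. Since $|\mathcal{E}| > d_{\text{spu}}$ and the $\mu_e$ generically span $\R^{d_{\text{spu}}}$, it suffices to show $c_e = 0$ for all $e$. I would argue that, because the matrices $\Sigma_e$ enter independently of the directions $\mu_e$, the quadratic terms $\w_{\text{spu}}^\top \Sigma_e \w_{\text{spu}}$ cannot systematically compensate the $c_e$-dependent terms across more than $d_{\text{spu}}$ environments unless all $c_e$ coincide — this conspiracy-ruling-out is exactly what the general position predicate must encode — and then $k > d_{\text{spu}}$ spanning directions $\mu_e$ with $c_e$ all equal force $c_e \equiv 0$, hence $\w_{\text{spu}} = 0$; equivalently, the identities can be read as more than $d_{\text{spu}}$ quadrics in $\R^{d_{\text{spu}}}$ all passing through the origin, and general position is the statement that the origin is their only common real zero. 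Finally, for the last sentence I would write out the precise general position condition (deferred to the appendix) as an open, full-measure condition on the tuple of environment parameters and reweighted moments; under density ratio weighting the reference measure is literally one of the $p_{e_j}$, so the degenerate configurations form a proper algebraic subset of parameter space and are hit with probability zero when the environments are drawn from the model.

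I expect the main obstacle to be this last reduction: because $\w_{\text{spu}}$ enters the weighted risk quadratically (through both $c_e^2$ and $\w_{\text{spu}}^\top \Sigma_e \w_{\text{spu}}$), the clean linear-algebra argument used for IRM-style identifiability does not transfer directly, and one must isolate precisely which non-degeneracy of the family $\{(\mu_e, \Sigma_e, M_e)\}_e$ makes the count $|\mathcal{E}| > d_{\text{spu}}$ exactly sufficient, while keeping that non-degeneracy weak enough to hold almost surely under density ratio weighting. A secondary technical point is bookkeeping the normalizing constants of the pseudo-distributions $q_{ij}$ and ensuring the reference second-moment matrices $M_q$ are nonsingular, which should follow from the non-degeneracy of $p_e(X_{\text{inv}})$ built into general position.
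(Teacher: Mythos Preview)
Your proposal is correct and follows the paper's overall strategy: expand the weighted squared risk, use the reweighting constraint to cancel the purely invariant terms, obtain a system of scalar identities in $\w_{\text{spu}}$, and invoke general position to force $\w_{\text{spu}}=0$. Your route to the key identity is cleaner than the paper's: by substituting the generative equations into $f(\x)-y$ \emph{before} squaring you obtain the conditional risk $[(\delta+c_e\w^*_{\text{inv}})^\top\x_{\text{inv}}]^2+(c_e-1)^2\sigma_y^2+\w_{\text{spu}}^\top\Sigma_e\w_{\text{spu}}$ in one line, whereas the paper first expands $(\w^\top\x-y)^2$ and then laboriously block-decomposes the resulting weighted second-moment matrix $\Sigma^{i,j}$ into invariant, spurious, and mixed submatrices, computing the mixed block from the generative model. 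Both arrive at the same system.

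On the general-position step, your second framing (the identities are quadrics through the origin in $\R^{d_{\text{spu}}}$, and general position asserts the origin is their only common zero) is equivalent to what the paper does: it packages the same condition as full rank of the matrix with rows $\w_{\text{spu}}^\top(\Sigma^{i,j}_{\text{spu}}-\Sigma^{j,i}_{\text{spu}})+\gamma(\mu^{i,j}_{\text{spu}}-\mu^{j,i}_{\text{spu}})^\top$, which is the linearization of your quadrics. Your first heuristic (argue all $c_e$ must coincide and then use spanning $\mu_e$) is not how the paper proceeds and does not obviously go through as an intermediate step, since the terms $\w_{\text{spu}}^\top\Sigma_e\w_{\text{spu}}$ can absorb differences among the $c_e$; you were right to flag this reduction as the obstacle and to fall back on the direct quadric-intersection formulation. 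For the probability-one claim under density ratio weighting, the paper, like you, fixes a reference environment so that the reference moments coincide across pairs, and then appeals to an existing measure-zero lemma (Arjovsky et al., Thm.~10; Wald et al., Lemma~S4) rather than proving it from scratch.
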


This result shows that in a linear regression setting, solving for weighted risk invariance allows us to learn a predictor that discards spurious correlations to generalize across environments. The general position condition excludes cases such as $\alpha(\x_{\text{inv}})=\beta(\x_{\text{inv}})=0$ that reduce to a degenerate set of equations. We further show that density ratio weighting achieves this non-degeneracy with probability 1, and we conjecture that this holds for other choices of weighting functions.

\subsection{Finding an effective weighting function}
Each weighting function introduces different challenges to the learning process. Density ratio weighting has the typical importance weighting limitation, where the support of the distribution in the numerator needs to be contained in the support of the distribution in the denominator. In addition, density ratio weights tend to have a high maximum and high variance, and as a result, lead to a high sample complexity---another known issue of importance weighting \citep{cortes2010learning}.\footnote{See \cref{appendix:sample_complexity} for more discussion.} Conversely, while the number of weights/constraints for the general form of weighted risk invariance is combinatorial in the number of environments, density ratio weighting only requires weights and constraints that are linear in the number of environments. (Intuitively, this is because we set a reference environment, weighted by $\alpha = 1$, and weight all other environments by a density ratio that aligns them with the reference environment; we discuss this in more detail in our proof of \cref{theorem:WRI_OOD}).

In practice, we find this benefit is not computationally significant when the number of environments is sufficiently small. Instead, we propose to use the invariant feature densities as the weighting functions. That is, for two environments $e_i$ and $e_j$, we use a weighted risk of 
\begin{equation*} \label{eq:density_weighted_risk}
    \Risk^{e_i, e_j}(f) = \EV{p_{e_i}(X^{e_i}, Y^{e_i})}{p_{e_j}(\x_{\text{inv}}) \ell(f(\x), y)}
\end{equation*}
so that weighted risk invariance means $\Risk^{e_i, e_j}(f) = \Risk^{e_j, e_i}(f)$. There are several practical benefits to this approach. For one, density weighting does not impose requirements on the support of the environments, and allows for invariant learning over all regions of invariant feature support overlap.\footnote{Support overlap is necessary for invariant learning to meaningfully occur; this is a fundamental limitation of invariant learning in general \citep{ahuja2021invariance}, and violating this does not misalign the weighted risk objective. In non-overlapping regions, the weighted risk would simply be zero, causing the ERM objective to take over.} And although density weighting uses a combinatorial number of constraints, the number of weighting functions we need to learn is still equal to the number of environments. As another benefit, learning the invariant feature densities gives us a signal indicating the distribution shift in invariant features across domains that is useful for downstream tasks. We therefore choose the weighting functions to be the invariant feature densities, and more precisely define weighted risk invariance as risk invariance with density weighting.

To enforce this weighted risk invariance, we propose to minimize the objective
\begin{equation}
\label{eq:objective}
    \mathcal{R}_{WRI}(f) = \sum_{e \in \mathcal{E}_{tr}} \Risk^e(f) + \lambda \sum_{\substack{e_i \neq e_j, \\ e_i, e_j \in \mathcal{E}_{tr}}} (\Risk^{e_i, e_j}(f) - \Risk^{e_j, e_i}(f))^2.
\end{equation}
The first term enforces ERM to ensure good average performance across the training environments. The second term enforces weighted risk invariance, so we call it the WRI penalty term. Hyperparameter $\lambda$ controls the weight of the WRI penalty.

\subsection{Comparison to other invariant learning methods}
\label{sec:comparison}
\textbf{Comparison to REx~~} Risk invariance across environments is a popular and practically effective method for achieving invariant prediction \citep{xie2020risk, krueger2021out, liu2021stable, eastwood2022probable} that was formalized by REx \citep{krueger2021out}. Specifically, the VREx objective is a widely used approach of achieving risk invariance, where the variance between risks is penalized in order to enforce equality of risks:
\begin{equation}
\label{eq:vrex_objective}
    \mathcal{R}_{VREx}(f) = \mathcal{R}_{ERM}(f) + \lambda_{VREx} \Var(\{\mathcal{R}^{e_1}(f), \dots, \mathcal{R}^{e_k}(f)\}).
\end{equation}
The variance between risks is equal to the average mean squared error between risks, up to a constant. Note that the WRI penalty in \eqref{eq:objective} reduces to the VREx variance penalty under uniform weighting.

\begin{figure}
    \centering
    \includegraphics[width=0.7\textwidth]{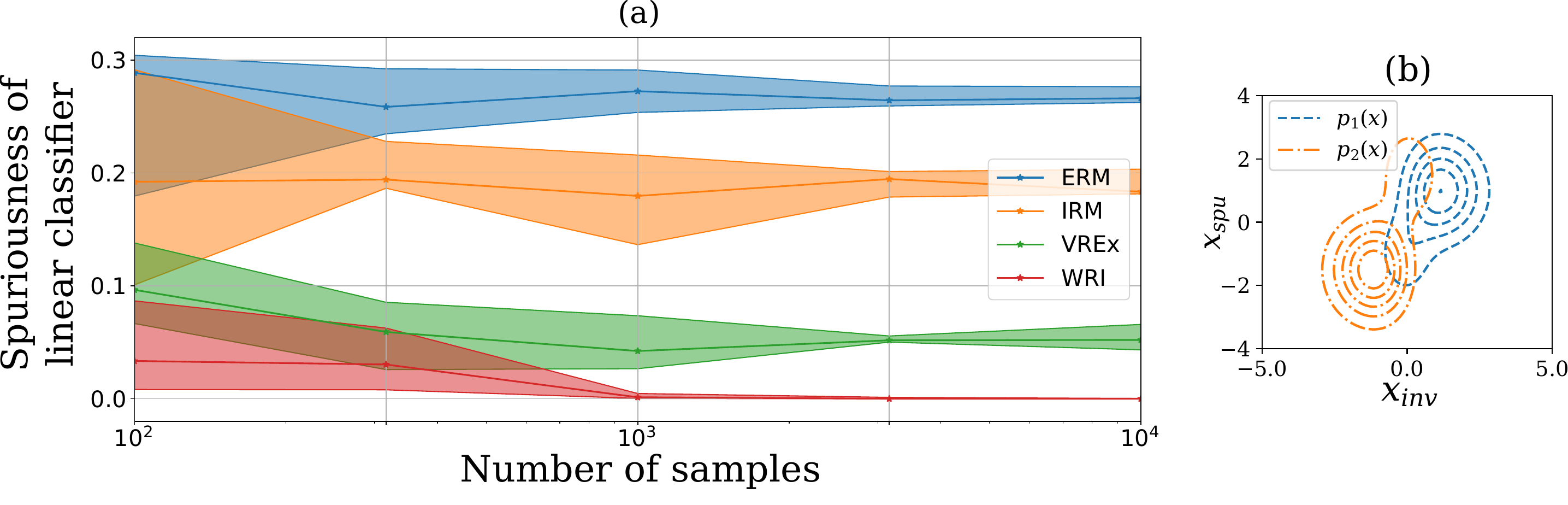}
    \caption{(a) We demonstrate the finite sample behavior of ERM, IRM, VREx, and WRI methods in the case where the distribution shift is large. We sample data from the distributions shown in (b). Even as the number of samples increases, ERM, IRM and VREx methods continue to select spurious classifiers while the WRI method quickly converges to an invariant classifier.}
    \label{fig:finite_sample}
\end{figure}

Under \shiftname{}, risk invariance objectives can be misaligned with the goal of learning a predictor that only depends on $X_{\text{inv}}$. To illustrate this, consider a problem with heteroskedastic label noise (where some instances are more difficult to classify than others \citep{krueger2021out}). Under this setting, \shiftname{} results in a case where one training environment has more difficult examples than the other. Then an invariant predictor would obtain different losses on both environments, meaning it will not satisfy risk invariance. Figure \ref{fig:linear_marginal_differs} demonstrates this case, and shows that a risk invariance penalty (VREx) does not learn the invariant classifier.
In contrast, weighted risk invariance holds when there is both heteroskedasticity and \shiftname{}.

\textbf{Comparison to IRM~~} IRM searches for the optimal predictor on latent space that is invariant across environments. This work generated many follow-up extensions \citep{ahuja2021invariance, lu2021nonlinear}, as well as some critical exploration of its failure cases \citep{rosenfeld2020risks, ahuja2020invariant, kamath2021does, guo2021out}. The original objective is a bi-level optimization problem that is non-convex and challenging to solve, so its relaxation IRMv1 is more often used in practice: 
\begin{equation}
\label{eq:irm_objective}
    \mathcal{R}_{IRM}(f) = \mathcal{R}_{ERM}(f) + \lambda_{IRM} \sum_{e \in \mathcal{E}_{tr}} \| \nabla_{w|w=1.0} \mathcal{R}^e (w \cdot f) \|^2_2.
\end{equation}
Note that IRM uses the average gradient across the dataset to enforce invariance. This approach inherently introduces a sample complexity issue: when invariance violations are sparse or localized within a large dataset, the average gradient only indirectly accounts for these anomalies, requiring more samples to recognize the violations. Empirically, we observe that even for a toy example, the sample complexity for IRM is high under \shiftname{}. We demonstrate this in Figure \ref{fig:finite_sample}, where WRI converges to an invariant predictor while IRM does not. (Note that REx does not converge to an invariant predictor either---but we do not expect it to converge to an invariant predictor in this case, regardless of the amount of training data.)

\begin{figure}
    \centering
    \includegraphics[width=0.9\textwidth]{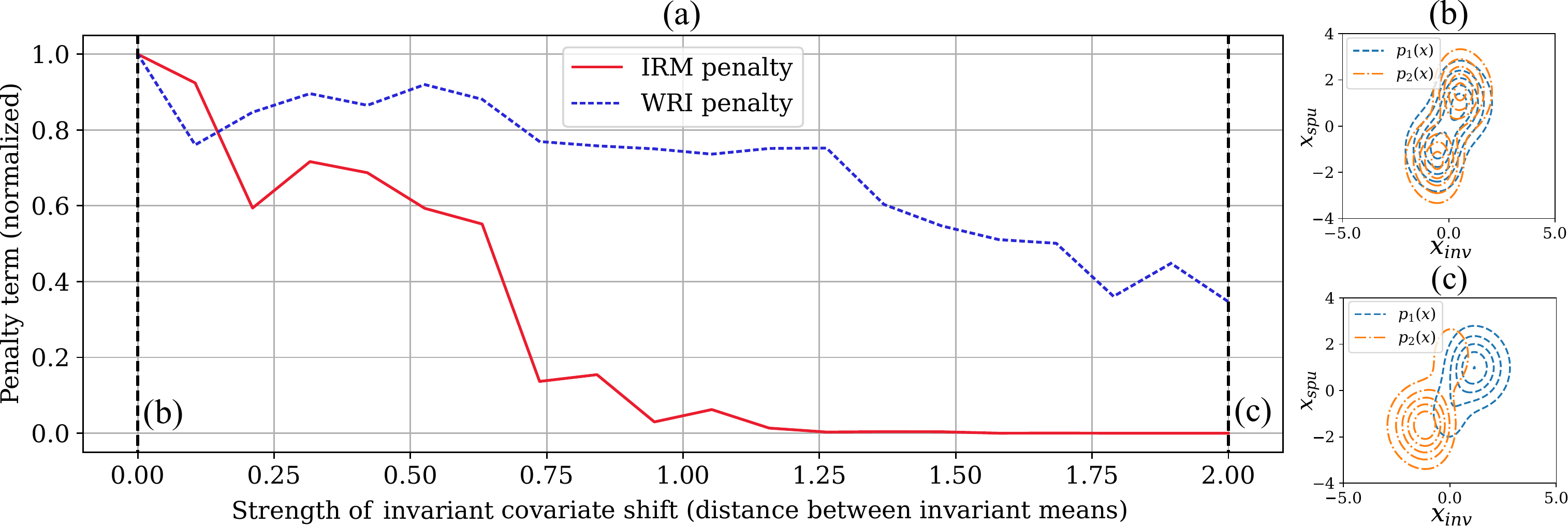}
    \caption{The WRI penalty is more sensitive to spurious classifiers than the IRM penalty under \shiftname{}. (a) We start with two data distributions ($p_1$ and $p_2$) and increase the distance between their means from 0 to 2 in the invariant direction, essentially creating \shiftname{} ranging in degree from the diagram in (b) to the diagram in (c). We then search through the set of challenging, slightly spurious classifiers with near-optimal ERM loss, so that we must rely on the invariant penalty to reject these classifiers. We plot how the minimum IRM and WRI penalties in these regions change as covariate shift increases.}
    \label{fig:IRM_vs_WRI}
\end{figure}

In Figure \ref{fig:IRM_vs_WRI}, we demonstrate how the average loss-based WRI penalty is more sensitive to invariance violations than the average gradient-based IRM penalty. To create this figure, we start with a model that uses spurious features and observe how the IRM and WRI penalties for the model change as we introduce covariate shift to the invariant features. Ideally, the penalties should be consistently nonzero (regardless of the degree of shift) as the model is not invariant. We see that the IRM penalty changes quite significantly as the \shiftname{} increases, while the change for WRI is less pronounced.

\subsection{Implementation}
\label{sec:implementation}
Evaluating the WRI penalty requires weighting the loss by functions of the invariant features. However, we do not have access to the invariant features---otherwise the invariant learning problem would already be solved. In Appendix~\ref{appendix:WRI_invariant_func}, we empirically show that for a simple case of our regression setting where we can compute exact densities, optimizing for WRI on the joint features (rather than the invariant features) still leads to learning invariant feature weighting functions. Unfortunately, density estimation on high-dimensional data is notoriously difficult, and we often cannot compute exact density values in practice.

To tackle more realistic settings where we do not have exact density values, our approach must optimize for weighted risk invariance while also learning the feature densities. We therefore propose to use alternating minimization, an effective strategy for nonconvex optimization problems involving multiple interdependent terms. We propose to alternate between a WRI-regularized prediction model and environment-specific density models. In one step, we keep the prediction model fixed and update a density model for each environment; in the alternating step, we keep all density models fixed and train the prediction model. 

We do not offer any convergence guarantees for this approach. Even so, our experiments support that alternating minimization can recover an invariant predictor and density estimates effectively. Most of the experiments in our main body focus on using WRI to learn an invariant predictor. However, in Appendix \ref{appendix:density_estimation}, we show that running alternating minimization on our regression setting recovers a close approximation of the ground truth densities (up to a normalizing constant). We also find that the learned densities are useful for predicting whether an input is part of the training invariant feature distributions or OOD in \S \ref{sec:experiments_CMNIST}.

\textbf{Practical objective~~} We want to avoid learning a representation where there is no overlap between environments (corresponding to zero weights for all data points), so we introduce a negative log penalty term to encourage high entropy and a large support for each environment. We observe empirically that this term encourages overlap between environments. When our density model matches the true distribution family of the features, the negative log term encourages a maximum likelihood estimate of the feature distribution parameters.

To make the dependence of the weighted risk on the invariant feature densities explicit, we write the associated densities $d^{e_j}$ as a parameter of the weighted risk $R^{e_i, e_j}(f; d^{e_j})$. As in \S \ref{sec:invariant_learning}, we denote the learned representation as $\Phi(\x)$ so that $f = g \circ \Phi$ for some predictor $g$. The full loss function is
\begin{equation}
\label{eq:empirical_obj}
\mathcal{L}(f, \{d^{e_i}\}_{i=1}^k) \hspace{-1pt} = \hspace{-5pt} \sum_{e \in \mathcal{E}_{tr}} \hspace{-3pt} \Risk^e(f) + \lambda \hspace{-10pt} \sum_{\substack{e_i \neq e_j, \\ e_i, e_j \in \mathcal{E}_{tr}}} \hspace{-8pt} (\Risk^{e_i, e_j}(f; d^{e_j}) - \Risk^{e_j, e_i}(f; d^{e_i}))^2 + \beta \hspace{-3pt} \sum_{e \in \mathcal{E}_{tr}} \hspace{-3pt} \EV{p_{e}(X)}{-\log{d^{e}(\Phi(\x))}}.
\end{equation}
As in \eqref{eq:objective}, the loss is comprised of the ERM and WRI penalties, with an additional log penalty term to regularize the learned representations.
For full implementation details, refer to Appendix~\ref{appendix:implementation}.

% ----------------------------------------------------------------------------------
\section{Experiments}
\label{sec:experiments}
We evaluate our WRI implementation on synthetic and real-world datasets with distribution shifts, particularly focusing on cases of covariate shift in the invariant features. In all of the datasets, we select our model based on a validation set from the test environment, and we report the test set accuracy averaged over 5 random seeds with standard errors. Training validation selection assumes that the training and test data are drawn from similar distributions, which is not the case we aim to be robust to \citep{gulrajani2020search}; test validation selection is more useful for demonstrating OOD capabilities \citep{ruan2021optimal}.

Our major baselines are ERM \citep{vapnik1991principles}, IRM \citep{arjovsky2019invariant}, and VREx \citep{krueger2021out}. IRM and VREx are two other causally-motivated works that also search for conditional invariance as a signature of an underlying causal structure. Because WRI shares a similar theoretical grounding, we find it particularly important to compare our empirical performance with these works. Appendix \ref{appendix:experiments} includes comparisons with other non-causal baselines, as well as additional experiments and details.

\begin{table}
\begin{minipage}{.52\textwidth}
    \vspace{-9.2pt} \caption{Ideal HCMNIST digit and color penalty comparison ($\times 1e\text{-}3$).}
    \label{table:HCMNIST_idealized}
    \adjustbox{max width=0.985\textwidth}{
    \begin{tabular}{lllll}
    \toprule
    \textbf{Algorithm}    & \textbf{Dataset} & \textbf{Digit} & \textbf{Color} & \textbf{Min} \\ \midrule
    \multirow{2}{*}{WRI}  & HCMNIST          & 0.0            & 4.7            & Digit \greencmark  \\
                          & HCMNIST-CS       & 0.0            & 0.8            & Digit \greencmark  \\ \midrule
    \multirow{2}{*}{VREx} & HCMNIST          & 0.0            & 5.4            & Digit \greencmark  \\
                          & HCMNIST-CS       & 1.1            & 0.4            & Color \redxmark    \\ \bottomrule
    \end{tabular}
    }
\end{minipage}\hfill
\begin{minipage}{.46\textwidth}
    \caption{Accuracy on HCMNIST without covariate shift and with covariate shift}
    \label{table:HCMNIST_experiments}
    \centering
    \adjustbox{max width=\textwidth}{%
    \begin{tabular}{lccc}
    \toprule
    \textbf{Algorithm}          & \textbf{HCMNIST}     & \textbf{HCMNIST-CS} & \textbf{Avg}                \\
    \midrule
    ERM                         & 53.8 $\pm$ 2.3              & 51.1 $\pm$ 0.9              & 52.4                        \\
    IRM                         & 67.2 $\pm$ 2.7              & 61.3 $\pm$ 4.7              & 64.3                        \\
    VREx                        & 67.4 $\pm$ 2.3              & 58.2 $\pm$ 2.3              & 62.8                        \\
    \midrule
    WRI                         & \textbf{75.1 $\pm$ 2.9}              & \textbf{74.7 $\pm$ 3.8}              & \textbf{74.9}                        \\
    \bottomrule \\
    \end{tabular}}
\end{minipage}
\centering
\begin{minipage}{0.8\textwidth}
    \caption{CMNIST OOD detection}
    \label{table:CMNIST_ROC}
    \centering
    \adjustbox{max width=\textwidth}{
    \begin{tabular}{lccccc}
    \toprule
    \textbf{Algorithm} & \textbf{tpr@fpr=20\%} & \textbf{tpr@fpr=40\%} & \textbf{tpr@fpr=60\%} & \textbf{tpr@fpr=80\%} & \textbf{AUROC} \\
    \midrule
    ERM & 25.4 $\pm$ 3.6 & 45.6 $\pm$ 2.8 & 64.7 $\pm$1.7 & 82.2$\pm$1.2 & 0.539$\pm$0.018 \\
    IRM & 29.8 $\pm$ 3.2 & 46.8 $\pm$ 2.8 & 66.4 $\pm$1.9 & 84.8$\pm$0.9 & 0.565$\pm$0.017 \\
    VREx & 25.3 $\pm$ 2.5 & 43.5 $\pm$ 2.3 & 61.6 $\pm$1.5 & 80.8$\pm$1.1 & 0.525$\pm$0.014 \\
    \midrule
    WRI & \textbf{36.7 $\pm$ 6.7} & \textbf{54.1 $\pm$ 5.8} & \textbf{68.1 $\pm$4.5} & \textbf{85.0$\pm$1.5} & \textbf{0.595$\pm$0.039} \\
    \bottomrule \vspace{-1em}
    \end{tabular}}
\end{minipage}
\end{table}

\subsection{ColoredMNIST}
\label{sec:experiments_CMNIST}
ColoredMNIST (CMNIST) is a dataset proposed by \citet{arjovsky2019invariant} as a binary classification extension of MNIST. In this dataset, digit shapes are the invariant features and digit colors are the spurious features. The digit colors are more closely correlated to the labels than the digit shapes are, but the correlation between color and label is reversed in the test environment. The design of the dataset allows invariant predictors, which base their predictions on the digit shape, to outperform predictors that use spurious color information. We create a heteroskedastic variant on this dataset, HCMNIST, where we vary the label flip probability with the digit. We also create HCMNIST-CS, as a version of HCMNIST with \shiftname{}, by enforcing different distributions of digits in each environment. For a predictor to perform well on the test set of these datasets, it must learn to predict based on the invariant features, even under heteroskedastic noise (present in both HCMNIST and HCMNIST-CS) and \shiftname{} (HCMNIST-CS). Finally, we create ideal versions of these datasets where we simplify the image data to two-dimensional features of digit value and color. (More details on how these datasets were generated can be found in Appendix~\ref{sec:cmnist_appendix}.)

We use the ideal datasets to evaluate the WRI and VREx penalties with a predictor that uses only the digit value (i.e.\ an invariant predictor) and a predictor that uses only the color (i.e.\ a spurious predictor), and we report the results in Table~\ref{table:HCMNIST_idealized}. As anticipated, WRI registers zero penalty for the invariant predictor in both the HCMNIST and HCMNIST-CS scenarios. In contrast, VREx registers zero penalty for the invariant predictor on HCMNIST but a non-zero penalty on HCMNIST-CS, favoring the spurious predictor under heteroskedasticity and \shiftname{}. 

We compare our method with all baselines on the image (non-ideal) data in Table~\ref{table:HCMNIST_experiments}. These results demonstrate that WRI consistently performs well in both the heteroskedastic and the heteroskedastic and distribution shift settings, while VREx and IRM show significant degradation under \shiftname{}.

\textbf{OOD detection performance of our learned densities} To test the estimated invariant density from WRI, we compare its utility as an OOD detector to the model confidences from ERM, IRM, and VREx. The evaluation is conducted on a modified CMNIST test split with mirrored digits. Since the estimated invariant density should be a function of the shape, we expect the invalid digits to have lower density estimates. We compute AUROC values for each experiment and report the results in Table~\ref{table:CMNIST_ROC}. We observe that at all computed false positive rates, the true positive rate is higher for our method; this means that our learned density values are better for detecting OOD digits than the prediction confidences from other methods. For more details, see Appendix \ref{sec:appendix_ood_cmnist}.

\subsection{Real-world datasets from DomainBed}
\label{sec:experiments_domainbed}
We evaluate our method on 5 real-world datasets that are part of the DomainBed suite, namely VLCS \citep{fang2013unbiased}, PACS \citep{li2017deeper}, OfficeHome \citep{venkateswara2017deep}, TerraIncognita \citep{beery2018recognition}, and DomainNet \citep{peng2019moment}. We run on ERM-trained features for computational efficiency, as these should still contain spurious information as well as the invariant information necessary to generalize OOD \citep{rosenfeld2022domain}. In order to achieve a fair comparison, we evaluate all methods starting from the same set of pretrained features.
We report the average performance across environments on each dataset in Table \ref{table:domainbed_experiments}.

While the WRI predictor achieves higher accuracy than the baselines, we find that the performances are all similar. Methods like VREx that rely on assumptions like homoskedasticity may still see high accuracy on datasets that fulfill those assumptions. We note that, in addition to generalization accuracy, our method has the additional benefit of reporting when predictions are based on invariant features that have low probability in training. Additional information and results on DomainBed are provided in Appendix~\ref{sec:domain_bed_appendix}.

\begin{table}
\centering
\caption{DomainBed results on feature data}
\label{table:domainbed_experiments}
\adjustbox{max width=0.85\textwidth}{%
\begin{tabular}{lcccccc}
\toprule
\textbf{Algorithm} & \textbf{VLCS} & \textbf{PACS} & \textbf{OfficeHome} & \textbf{TerraIncognita} & \textbf{DomainNet} &\textbf{Avg} \\
\midrule
ERM & 76.5 $\pm$ 0.2 & 84.7 $\pm$ 0.1 & 64.5 $\pm$ 0.1 & 51.2 $\pm$ 0.2 & 33.5 $\pm$ 0.1 & 62.0 \\
IRM & 76.7 $\pm$ 0.3 & 84.7 $\pm$ 0.3 & 63.8 $\pm$ 0.6 & 52.8 $\pm$ 0.3 & 22.7 $\pm$ 2.8 & 60.1 \\
% GroupDRO & 77.0 $\pm$ 0.2 & 84.8 $\pm$ 0.1 & 65.1 $\pm$ 0.1 & 51.3 $\pm$ 0.2 & 30.9 $\pm$ 0.1 & 61.9 \\
% Mixup & 76.6 $\pm$ 0.2 & 85.0 $\pm$ 0.1 & 66.1 $\pm$ 0.0 & 52.3 $\pm$ 0.7 & 33.1 $\pm$ 0.1 & 62.6 \\
% MLDG & 75.8 $\pm$ 0.2 & 82.3 $\pm$ 0.2 & 64.8 $\pm$ 0.2 & 48.3 $\pm$ 0.2 & 33.0 $\pm$ 0.1 & 60.8 \\
% CORAL & 76.5 $\pm$ 0.2 & 85.1 $\pm$ 0.2 & 65.0 $\pm$ 0.1 & 51.8 $\pm$ 0.4 &  33.5 $\pm$ 0.1 & 62.5 \\
VREx & 76.7 $\pm$ 0.2 & 84.8 $\pm$ 0.2 & 64.6 $\pm$ 0.2 & 52.2 $\pm$ 0.3 & 26.6 $\pm$ 2.1 & 61.0 \\
\midrule
WRI  & 77.0 $\pm$ 0.1 & 85.2 $\pm$ 0.1 & 64.5 $\pm$ 0.2 & 52.7 $\pm$ 0.3 & 32.8 $\pm$ 0.0 & 62.5 \\
\bottomrule
\end{tabular}}
\end{table}

% ----------------------------------------------------------------------------------
\section{Related works}
\textbf{Invariant learning~~} Much of the causal motivation for invariant learning stems from \citet{scholkopf2012causal}, who connected the invariant mechanism assumption to invariance in conditional distributions across multiple environments. Since then, invariant learning works have found that the invariant mechanism assumption results in additional forms of invariance, including invariance in the optimal predictor \cite{arjovsky2019invariant} and risk invariance under homoskedasticity \citep{krueger2021out}. Uncovering the underlying invariant mechanisms through explicit causal discovery \citep{peters2016causal} can be difficult to scale up to complex, high-dimensional data, so a significant branch of invariant learning instead leverages these additional forms of invariance to recover a predictor based on invariant mechanisms. While these methods do not provide all the guarantees typically offered by methods in causal discovery, they are faster and simpler in comparison, and can still provide theoretical guarantees for certain causal structures.

Learning theory provides another framework for understanding invariant learning. The seminal work of \citet{ben2010theory} shows that the risk on a test environment is upper bounded by the error on the training environment(s), the total variation between the marginal distributions of training and test, and the difference in labeling functions between training and test. While the first term is minimized in ERM, the second term motivates learning marginally invariant features $\Psi(X)$ such that $p_e(\Psi(X^e))$ is invariant across $e$ \citep{pan2010domain, baktashmotlagh2013unsupervised, ganin2016domain, tzeng2017adversarial, long2018conditional, zhao2018adversarial}, and the third term motivates learning conditionally invariant features $\Phi(X)$ such that $p_e(Y^e|\Phi(X^e))$ is invariant across $e$ \citep{muandet2013domain, koyama2020out}. Observing the importance of both approaches, several works even attempt to learn features that are both marginally and conditionally invariant \citep{zhang2013domain, long2015learning, li2021learning, guo2021out}. Our work follows a similar vein: we focus on learning conditionally invariant features, under the assumption that these align with the underlying causal features, but we also reweight the conditionally invariant features to mitigate any shift in their marginal distributions. 

\textbf{Importance weighting methods~~} Starting from the early and influential method of importance weighting \citep{shimodaira2000improving}, many methods for dealing with covariate shift use some form of weighting approach \citep{martin2023double}. Weighting methods work by weighting the loss with a density ratio of test features to training features \citep{shimodaira2000improving, huang2006correcting}, or vice versa \citep{liu2014robust}. However, these methods assume that the support of the features in the numerator is contained in the support of the features in the denominator. Further, they can perform poorly when the density ratio is large (i.e.\ in areas where the features in the denominator are much more rare); these large values lead to high variance and low effective sample size in the density ratio estimator (see \cref{appendix:sample_complexity}). For this reason, we propose density weighting rather than density ratio weighting, as density weighting is not limited by the support of any feature distribution and leads to better performance in practice, as established by \citet{martin2023double}. Alternatively, we can also overcome the density ratio support limitation with a generalized form of importance weighting \citep{fang2024generalizing}.

\textbf{Density estimates for downstream tasks~~} 
Density estimates are useful for a variety of downstream tasks. They serve as an intuitive measure of epistemic uncertainty \citep{hullermeier2021aleatoric}: in our work, the invariant feature density estimates approximate how much reliable (spurious-free) evidence we have in the training data. As a result, our density estimates are useful for detecting when samples are OOD, along the line of previous/concurrent work \citep{breunig2000lof, charpentier2020posterior, peng2024conjnorm}. In general, density estimates are useful for building trustworthy systems, and they can be leveraged for both fairness \citep{cho2020fair, balunovic2021fair} and calibration \citep{kuleshov2022calibrated, deshpande2023calibrated, xiong2023proximity}.

% ----------------------------------------------------------------------------------
\section{Discussion}
\label{sec:discussion}
\textbf{Limitations of our method~~} The invariant feature weighting functions in WRI allow its solutions to be robust to the case of heteroskedasticity and \shiftname{}. However, these weighting functions are constrained by the invariant feature densities of each environment; we do not have access to these densities in practice, and can only recover an estimate of that density through alternating minimization. Future work could focus on better understanding the optimization process (in the style of \citet{chen2022pareto, chen2024understanding}), with the goal of recovering the invariant feature densities with more rigorous guarantees. We believe that an accurate invariant feature density estimate would be useful for reporting the confidence of predictions during deployment. In general, the question of how to better quantify the uncertainties of domain adaptation/generalization models remains an open line of research.

\textbf{Limitations of invariant prediction~~}
Invariant prediction methods assume there exists some generalizable relationship across training environments, but there are cases where it is impossible to extract the relationship. Notably for linear classification, \citet{ahuja2021invariance} proves that generalization is only possible if the support of invariant features in the test environment is a subset of the union of the supports in the training environments. For this reason, it is important to report when we are given data points that are rare in the invariant feature distribution; then, users know when a model is extrapolating outside of the training domain.

\textbf{Broader impacts~~} Compared to other invariant learning methods, our WRI method offers the additional benefit of providing density estimates that allow a level of OOD detection, which can be useful in safety-critical situations. In general, we believe these estimates provide a useful measure of epistemic uncertainty that is not usually present in domain generalization methods.

% ----------------------------------------------------------------------------------
\section{Conclusion}
Our work demonstrates the utility of weighted risk invariance for achieving invariant learning, even in the difficult case of heteroskedasticity and \shiftname{}. We proved that weighted risk invariance is a signature of spurious-free prediction, and we further proved that enforcing weighted risk invariance allows us to recover a spurious-free predictor under a general causal setting. We proposed an efficient and useful method of weighting by the invariant feature densities, WRI, and we introduced an algorithm to practically enforce WRI by simultaneously learning the model parameters and the invariant feature densities. In our experiments, we demonstrated that the WRI predictor outperforms popular baselines under \shiftname{}. Finally, we showed that our learned invariant feature density reports when invariant features are outside of the training distribution, a useful signal for the predictor's trustworthiness.

\section{Reproducibility Statement}
The code for generating the figures and empirical results can be found at \url{https://github.com/ginawong/weighted_risk_invariance/}.

\section*{Acknowledgements}
GW and RC were partially supported by a MURI from the Army Research Office under Grant No. W911NF-17-1-0304. This is part of the collaboration between US DOD, UK MOD, and UK Engineering and Physical Research Council (EPSRC) under the Multidisciplinary University Research Initiative. AL was partially supported by JHU Discovery Award, Amazon-JHU AI2AI Award, and a seed grant from JHU Institute of Assured Autonomy.
% ----------------------------------------------------------------------------------

\bibliographystyle{tmlr}

%%%%%%%%%%%%%%%%%%%%%%%%%%%%%%%%%%%%%%%%%%%%%%%%%%%%%%%%%%%%
\appendix
\renewcommand\thefigure{\thesection.\arabic{figure}}
\renewcommand\thetable{\thesection.\arabic{table}}
\renewcommand{\thealgocf}{\thesection.\arabic{algocf}}
\renewcommand\theequation{\thesection.\arabic{equation}}
\setcounter{figure}{0}
\setcounter{table}{0}
\setcounter{algocf}{0}
\setcounter{equation}{0}

\section{Proofs} \label{appendix:proofs}
\traininvariant*
\begin{proof}
The weighted risk for environment $1$ is given by 
\begin{equation*} \label{eq:WRI}
\int_\mathcal{Y} \int_{\mathcal{X}_{\text{inv}}} \int_{\mathcal{X}_{\text{spu}}} p_1(\x_{\text{inv}}, \x_{\text{spu}}, y) \cdot \ell(f(\x_{\text{inv}}, \x_{\text{spu}}), y) \cdot \alpha(\x) \cdot d\x_{\text{spu}} ~ d\x_{\text{inv}} ~ dy.
\end{equation*} 

If $f$ and $\alpha$ are invariant, then we can transform this integral as follows.
\begin{subequations} \label{eq:WRI_12}
\begin{align*} 
&\int_\mathcal{Y} \int_{\mathcal{X}_{\text{inv}}} \int_{\mathcal{X}_{\text{spu}}} p_1(\x_{\text{inv}}, \x_{\text{spu}}, y) \cdot \ell(f(\x_{\text{inv}}, \x_{\text{spu}}), y) \cdot \alpha(\x_{\text{inv}}) \cdot d\x_{\text{spu}} ~ d\x_{\text{inv}} ~  dy \\
=& \int_\mathcal{Y} \int_{\mathcal{X}_{\text{inv}}} \ell(f(\x_{\text{inv}}), y) \cdot \alpha(\x_{\text{inv}})\int_{\mathcal{X}_{\text{spu}}} p_1(\x_{\text{inv}}, \x_{\text{spu}}, y) \cdot  d\x_{\text{spu}} ~ d\x_{\text{inv}} ~ dy \\
=& \int_\mathcal{Y} \int_{\mathcal{X}_{\text{inv}}} \ell(f(\x_{\text{inv}}), y) \cdot \alpha(\x_{\text{inv}}) \cdot p_1(\x_{\text{inv}}, y) ~ d\x_{\text{inv}} ~ dy \\
=& \int_\mathcal{Y} \int_{\mathcal{X}_{\text{inv}}} \ell(f(\x_{\text{inv}}), y) \cdot \alpha(\x_{\text{inv}}) \cdot p_1(y|\x_{\text{inv}}) \cdot p_1(\x_{\text{inv}}) ~ d\x_{\text{inv}} ~ dy. \tag{\ref{eq:WRI_12}} 
\end{align*}
\end{subequations}
Similarly, if $\beta$ is invariant, then the weighted risk for environment $2$ is given by
\begin{equation} \label{eq:WRI_21}
    \int_\mathcal{Y} \int_{\mathcal{X}_{\text{inv}}} \ell(f(\x_{\text{inv}}), y) \cdot \beta(\x_{\text{inv}})  \cdot p_2(y|\x_{\text{inv}}) \cdot p_2(\x_{\text{inv}}) \cdot  d\x_{\text{inv}} ~ dy.
\end{equation}
Because $p_e(y|\x_{\text{inv}})$ is the same for all $e$, then \eqref{eq:WRI_12} and \eqref{eq:WRI_21} are equal if $\alpha(\x_{\text{inv}})p_1(\x_{\text{inv}}) = \beta(\x_{\text{inv}})p_2(\x_{\text{inv}})$.
\end{proof}

To show that weighted risk invariance leads to OOD generalization, we must first make an assumption on the diversity of our training environments. We give a definition for general position in a similar style to previous works \citep{arjovsky2019invariant, wald2021calibration}.

\begin{definition}
\label{def:general_position}
We are given a set of $k$ environments such that $\binom{k}{2} > 2d_{\text{spu}}$, where $d_{\text{spu}}$ is the dimension of spurious features. Each environment $i$ has a weighted covariance $\Sigma^{i,j}$ and a weighted correlation $\mu^{i,j}$ with another environment $j$. We define these as 
\begin{align*}
    &\Sigma^{i,j} = \EV{p_i(X^i, Y^i)}{\x \x^{\T} \alpha_{ij}(\x_{\text{inv}})} \\
    &\mu^{i,j} = 2 \cdot \EV{p_i(X^i, Y^i)}{\x y \alpha_{ij}(\x_{\text{inv}})}.
\end{align*}
We say that this set of environments is in general position if for any scalar $\gamma \in \mathbb{R}$ and all nonzero $\x \in \mathbb{R}^{d_{\text{spu}}}$,
\begin{equation} \label{eq:general_pos}    
\text{dim}(\text{span}(\{ (\Sigma^{i,j}-\Sigma^{j,i})\x + \gamma (\mu^{i,j}-\mu^{j,i}) \}_{i,j \in [k]})) = d_{\text{spu}}.
\end{equation}
\end{definition}

With this definition, we proceed to show that for a set of environments in general position, a linear regression model that satisfies weighted risk invariance must be spurious-free.

\OODinvariant*
\begin{proof}
Assume we have weighted risk invariance between environments 1 and 2, so
\begin{align} \label{eq:WRI_linear}
&\int_\mathcal{Y} \int_{\mathcal{X}_{\text{inv}}} \int_{\mathcal{X}_{\text{spu}}} (\w^{\T} [\x_{\text{inv}}, \x_{\text{spu}}] - y)^2 \cdot p_1(\x_{\text{inv}}, \x_{\text{spu}}, y) \cdot \alpha(\x_{\text{inv}}) \cdot  d\x_{\text{spu}} ~ d\x_{\text{inv}} ~ dy \nonumber \\
=& \int_\mathcal{Y} \int_{\mathcal{X}_{\text{inv}}} \int_{\mathcal{X}_{\text{spu}}} (\w^{\T} [\x_{\text{inv}}, \x_{\text{spu}}] - y)^2 \cdot p_2(\x_{\text{inv}}, \x_{\text{spu}}, y) \cdot \beta(\x_{\text{inv}}) \cdot  d\x_{\text{spu}} ~ d\x_{\text{inv}} ~ dy.
\end{align} 

When we expand the square, the left hand side becomes
\begin{align*}
& \w^{\T} \left( \int_\mathcal{Y} \int_{\mathcal{X}_{\text{inv}}} \int_{\mathcal{X}_{\text{spu}}} [\x_{\text{inv}}, \x_{\text{spu}}] [\x_{\text{inv}}, \x_{\text{spu}}]^{\T} \cdot p_1(\x_{\text{inv}}, \x_{\text{spu}}, y) \cdot \alpha(\x_{\text{inv}}) \cdot d\x_{\text{spu}} ~ d\x_{\text{inv}} ~ dy \right) \w \\
&- 2 \w^{\T} \int_\mathcal{Y} \int_{\mathcal{X}_{\text{inv}}} \int_{\mathcal{X}_{\text{spu}}} [\x_{\text{inv}}, \x_{\text{spu}}] y \cdot p_1(\x_{\text{inv}}, \x_{\text{spu}}, y) \cdot \alpha(\x_{\text{inv}}) \cdot d\x_{\text{spu}} ~ d\x_{\text{inv}} ~ dy \\
&+ \int_\mathcal{Y} \int_{\mathcal{X}_{\text{inv}}} \int_{\mathcal{X}_{\text{spu}}} y^2 \cdot p_1(\x_{\text{inv}}, \x_{\text{spu}}, y) \cdot \alpha(\x_{\text{inv}}) \cdot d\x_{\text{spu}} ~ d\x_{\text{inv}} ~ dy. 
\end{align*}

We define
\begin{align*}
\Sigma^{1,2} =& \int_\mathcal{Y} \int_{\mathcal{X}_{\text{inv}}} \int_{\mathcal{X}_{\text{spu}}} [\x_{\text{inv}}, \x_{\text{spu}}] [\x_{\text{inv}}, \x_{\text{spu}}]^{\T} \cdot p_1(\x_{\text{inv}}, \x_{\text{spu}}, y) \cdot \alpha(\x_{\text{inv}}) \cdot d\x_{\text{spu}} ~ d\x_{\text{inv}} ~ dy \\
=& \int_{\mathcal{X}_{\text{inv}}} \int_{\mathcal{X}_{\text{spu}}} [\x_{\text{inv}}, \x_{\text{spu}}] [\x_{\text{inv}}, \x_{\text{spu}}]^{\T} \cdot p_1(\x_{\text{inv}}, \x_{\text{spu}}) \cdot \alpha(\x_{\text{inv}}) \cdot  d\x_{\text{spu}} ~ d\x_{\text{inv}} \\
\mu^{1,2} =& 2 \int_\mathcal{Y} \int_{\mathcal{X}_{\text{inv}}} \int_{\mathcal{X}_{\text{spu}}} [\x_{\text{inv}}, \x_{\text{spu}}] y \cdot p_1(\x_{\text{inv}}, \x_{\text{spu}}, y) \cdot \alpha(\x_{\text{inv}}) \cdot  d\x_{\text{spu}} ~ d\x_{\text{inv}} ~ dy \\
c^{1,2} =& \int_\mathcal{Y} \int_{\mathcal{X}_{\text{inv}}} \int_{\mathcal{X}_{\text{spu}}} y^2 \cdot p_1(\x_{\text{inv}}, \x_{\text{spu}}, y) \cdot \alpha(\x_{\text{inv}}) \cdot d\x_{\text{spu}} ~ d\x_{\text{inv}} ~ dy \\
=& \int_\mathcal{Y} \int_{\mathcal{X}_{\text{inv}}} y^2 \cdot p_1(\x_{\text{inv}}, y) \cdot \alpha(\x_{\text{inv}}) \cdot  d\x_{\text{inv}} ~ dy,
\end{align*}
with analogous definitions for $\Sigma^{2,1}$, $\mu^{2,1}$, and $c^{2,1}$. Then \eqref{eq:WRI_linear} can be written more succinctly as
\begin{equation} \label{eq:WRI_linear_short}    
\w^{\T}\Sigma^{1,2}\w - \w^{\T}\mu^{1,2} + c^{1,2} = \w^{\T}\Sigma^{2,1}\w - \w^{\T}\mu^{2,1} + c^{2,1}.
\end{equation}

We decompose the quadratic term $\Sigma^{1,2}$ into invariant, spurious, and mixed components:
\begin{align*}
\Sigma^{1,2} =& \int_{\mathcal{X}_{\text{inv}}} \int_{\mathcal{X}_{\text{spu}}} [\x_{\text{inv}}, \x_{\text{spu}}] [\x_{\text{inv}}, \x_{\text{spu}}]^{\T} \cdot p_1(\x_{\text{inv}}, \x_{\text{spu}}) \cdot \alpha(\x_{\text{inv}}) \cdot d\x_{\text{spu}} ~ d\x_{\text{inv}} \\
=& \int_{\mathcal{X}_{\text{inv}}} \int_{\mathcal{X}_{\text{spu}}}
\begin{bmatrix}
\x_{\text{inv}}\x_{\text{inv}}^{\T} & \x_{\text{inv}}\x_{\text{spu}}^{\T} \\
\x_{\text{spu}}\x_{\text{inv}}^{\T} & \x_{\text{spu}}\x_{\text{spu}}^{\T}
\end{bmatrix}
\cdot p_1(\x_{\text{inv}}, \x_{\text{spu}}) \cdot \alpha(\x_{\text{inv}}) \cdot d\x_{\text{spu}} ~ d\x_{\text{inv}} \\
=& \int_{\mathcal{X}_{\text{inv}}}
\begin{bmatrix}
\x_{\text{inv}}\x_{\text{inv}}^{\T} & \vec{0} \\
\vec{0} & \vec{0}
\end{bmatrix}
\cdot p_1(\x_{\text{inv}}) \cdot \alpha(\x_{\text{inv}}) \cdot  d\x_{\text{inv}} \\
&+ \int_{\mathcal{X}_{\text{inv}}} \int_{\mathcal{X}_{\text{spu}}}
\begin{bmatrix}
\vec{0} & \vec{0} \\
\vec{0} & \x_{\text{spu}}\x_{\text{spu}}^{\T}
\end{bmatrix}
\cdot p_1(\x_{\text{inv}}, \x_{\text{spu}}) \cdot \alpha(\x_{\text{inv}}) \cdot d\x_{\text{spu}} ~ d\x_{\text{inv}}. \\
&+ \int_{\mathcal{X}_{\text{inv}}} \int_{\mathcal{X}_{\text{spu}}}
\begin{bmatrix}
\vec{0} & \x_{\text{inv}}\x_{\text{spu}}^{\T} \\
\x_{\text{spu}}\x_{\text{inv}}^{\T} & \vec{0}
\end{bmatrix}
\cdot p_1(\x_{\text{inv}}, \x_{\text{spu}}) \cdot \alpha(\x_{\text{inv}}) \cdot d\x_{\text{spu}} ~ d\x_{\text{inv}}. \\
=&
\begin{bmatrix}
\Sigma^{1,2}_{\text{inv}} & \vec{0} \\
\vec{0} & \vec{0}
\end{bmatrix} +
\begin{bmatrix}
\vec{0} & \vec{0} \\
\vec{0} & \Sigma^{1,2}_{\text{spu}}
\end{bmatrix} + \Sigma^{1,2}_{mix}.
\end{align*}

% SPURIOUS MOVED, ORIGINALLY HERE ------------------
We break down the third term $\Sigma^{1,2}_{mix}$ in more detail, following the data-generating process for $\x_{\text{spu}}$.
\begin{align*}
\Sigma^{1,2}_{mix} =&\int_{\mathcal{X}_{\text{inv}}} \int_{\mathcal{X}_{\text{spu}}}
\begin{bmatrix}
\vec{0} & \x_{\text{inv}}\x_{\text{spu}}^{\T} \\
\x_{\text{spu}}\x_{\text{inv}}^{\T} & \vec{0}
\end{bmatrix}
\cdot p_1(\x_{\text{inv}}, \x_{\text{spu}}) \cdot \alpha(\x_{\text{inv}}) \cdot d\x_{\text{spu}} ~ d\x_{\text{inv}}\\  
=& \int_{\mathcal{X}_{\text{inv}}} \int_{\varepsilon} \int_{\eta}
\begin{bmatrix}
\vec{0} & \x_{\text{inv}}\mu_i^{\T}\x_{\text{inv}}^{\T}\w_{\text{inv}}^* + \x_{\text{inv}}\mu_i^{\T}\varepsilon^{\T} + \x_{\text{inv}}\eta^{\T} \\
(\w_{\text{inv}}^*)^{\T}\x_{\text{inv}}\mu_i\x_{\text{inv}}^{\T} + \varepsilon \mu_i \x_{\text{inv}}^{\T} + \eta \x_{\text{inv}}^{\T} & \vec{0}
\end{bmatrix} \dots \\
& \hspace{5em}\dots p_1(\varepsilon) \cdot p_1(\eta)
\cdot p_1(\x_{\text{inv}}) \cdot \alpha(\x_{\text{inv}}) \cdot d\eta ~ d\varepsilon ~ d\x_{\text{inv}} \\
=& \int_{\mathcal{X}_{\text{inv}}}
\begin{bmatrix}
\vec{0} & \x_{\text{inv}}\mu_i^{\T}\x_{\text{inv}}^{\T}\w_{\text{inv}}^* \\
(\w_{\text{inv}}^*)^{\T}\x_{\text{inv}}\mu_i\x_{\text{inv}}^{\T} & \vec{0}
\end{bmatrix} \cdot p_1(\x_{\text{inv}}) \cdot \alpha(\x_{\text{inv}}) \cdot d\x_{\text{inv}} \\
&+ \int_{\mathcal{X}_{\text{inv}}} \int_{\varepsilon}
\begin{bmatrix}
\vec{0} & \x_{\text{inv}}\mu_i^{\T} \\
\mu_i \x_{\text{inv}}^{\T} & \vec{0}
\end{bmatrix} \cdot \varepsilon \cdot p_1(\varepsilon)
\cdot p_1(\x_{\text{inv}}) \cdot \alpha(\x_{\text{inv}}) \cdot ~ d\varepsilon ~ d\x_{\text{inv}} \\
&+ \int_{\mathcal{X}_{\text{inv}}} \int_{\eta}
\begin{bmatrix}
\vec{0} & \x_{\text{inv}}\eta^{\T} \\
\eta \x_{\text{inv}}^{\T} & \vec{0}
\end{bmatrix} \cdot p_1(\eta)
\cdot p_1(\x_{\text{inv}}) \cdot \alpha(\x_{\text{inv}}) \cdot d\eta ~ d\x_{\text{inv}}.
\end{align*}

Both $\varepsilon$ and $\eta$ are mean-zero random variables that are independent of $\x_{\text{inv}}$. This means the second and third terms go to zero, leaving us with only the first term. We further decompose the first term, now explicitly labeling the dimensions of the 0 submatrices for clarity.
\begin{equation*}
\Sigma^{1,2}_{mix} = 
\begin{bmatrix}
I_{d_{\text{inv}}} & \vec{0}_{d_{\text{inv}} \times 1} \\
\vec{0}_{d_{\text{spu}} \times d_{\text{inv}}} & \mu_i
\end{bmatrix}
\Sigma^{1,2}_{w-inv}
\begin{bmatrix}
I_{d_{\text{inv}}} & \vec{0}_{d_{\text{inv}} \times d_{\text{spu}}} \\
\vec{0}_{1 \times d_{\text{inv}}} & \mu_i^{\T}
\end{bmatrix},
\end{equation*}
where
\begin{align*}
\Sigma^{1,2}_{w-inv} &= \int_{\mathcal{X}_{\text{inv}}}
\begin{bmatrix}
\vec{0}_{d_{\text{inv}} \times d_{\text{inv}}} & \x_{\text{inv}}(\w_{\text{inv}}^*)^{\T}\x_{\text{inv}} \\
\x_{\text{inv}}^{\T}\w_{\text{inv}}^*\x_{\text{inv}}^{\T} & \vec{0}_{1 \times 1}
\end{bmatrix} \cdot p_1(\x_{\text{inv}}) \cdot \alpha(\x_{\text{inv}}) \cdot d\x_{\text{inv}} \\
&= \begin{bmatrix}
\vec{0}_{d_{\text{inv}} \times d_{\text{inv}}} & \vec{v}^{1,2} \\
(\vec{v}^{1,2})^{\T} & \vec{0}_{1 \times 1}
\end{bmatrix},
\end{align*}
with $\vec{v}^{1,2} = \x_{\text{inv}}(\w_{\text{inv}}^*)^{\T}\x_{\text{inv}}$  being a $d_{\text{inv}}$-dimensional column vector. 

We also separate the linear term $\mu^{1,2}$ into invariant and spurious components: 
\begin{align*}
\mu^{1,2} =& 2 \int_\mathcal{Y} \int_{\mathcal{X}_{\text{inv}}} \int_{\mathcal{X}_{\text{spu}}} [\x_{\text{inv}}, \x_{\text{spu}}] y \cdot p_1(\x_{\text{inv}}, \x_{\text{spu}}, y) \cdot \alpha(\x_{\text{inv}}) \cdot  d\x_{\text{spu}} ~ d\x_{\text{inv}} ~ dy \\
=& 2\int_\mathcal{Y} \int_{\mathcal{X}_{\text{inv}}} [\x_{\text{inv}}, \vec{0}]y \int_{\mathcal{X}_{\text{spu}}} p_1(\x_{\text{inv}}, \x_{\text{spu}}, y) \cdot \alpha(\x_{\text{inv}}) \cdot d\x_{\text{spu}} ~ d\x_{\text{inv}} ~ dy \\
&+ 2\int_\mathcal{Y} \int_{\mathcal{X}_{\text{spu}}} [\vec{0}, \x_{\text{spu}}]y \int_{\mathcal{X}_{\text{inv}}} p_1(\x_{\text{inv}}, \x_{\text{spu}}, y) \cdot \alpha(\x_{\text{inv}}) \cdot d\x_{\text{inv}} ~ d\x_{\text{spu}} ~ dy \\
=& 2\int_\mathcal{Y} \int_{\mathcal{X}_{\text{inv}}} [\x_{\text{inv}}, \vec{0}]y \cdot p_1(y | \x_{\text{inv}}) \cdot p_1(\x_{\text{inv}}) \cdot \alpha(\x_{\text{inv}}) \cdot d\x_{\text{inv}} ~ dy \\
&+ 2\int_\mathcal{Y} \int_{\mathcal{X}_{\text{spu}}} [\vec{0}, \x_{\text{spu}}]y \int_{\mathcal{X}_{\text{inv}}} p_1(\x_{\text{inv}}, \x_{\text{spu}}, y) \cdot \alpha(\x_{\text{inv}}) \cdot d\x_{\text{inv}} ~ d\x_{\text{spu}} ~ dy \\
=& [\mu^{1,2}_{\text{inv}}, \vec{0}] + [\vec{0}, \mu^{1,2}_{\text{spu}}].
\end{align*}

Finally, we break down the constant term $c^{1,2}$. 
\begin{align*}
c^{1,2} =& \int_\mathcal{Y} \int_{\mathcal{X}_{\text{inv}}} y^2 \cdot p_1(\x_{\text{inv}}, y) \cdot \alpha(\x_{\text{inv}}) \cdot d\x_{\text{inv}} ~ dy \\
=& \int_\mathcal{Y} \int_{\mathcal{X}} y^2 \cdot p_1(y | \x_{\text{inv}}) \cdot p_1(\x_{\text{inv}}) \cdot \alpha(\x_{\text{inv}}) \cdot d\x_{\text{inv}} ~ dy.
\end{align*}

Given the condition on the weighting functions $\alpha(\x_{\text{inv}}) p_1(\x_{\text{inv}}) = \beta(\x_{\text{inv}}) p_2(\x_{\text{inv}})$, it is clear that $\Sigma^{1,2}_{\text{inv}} = \Sigma^{2,1}_{\text{inv}}$, $\vec{v}^{1,2} = \vec{v}^{2,1}$, $\mu^{1,2}_{\text{inv}} = \mu^{2,1}_{\text{inv}}$, and $c^{1,2} = c^{2,1}$. With this information, we define $\gamma := 2 \w_{\text{inv}}^{\T} \vec{v}^{1,2} - 1$ and simplify \eqref{eq:WRI_linear_short} to

\begin{equation} \label{eq:WRI_linear_spu}
\w_{\text{spu}}^{\T} \Sigma^{1,2}_{\text{spu}} \w_{\text{spu}} + \gamma (\mu^{1,2}_{\text{spu}})^{\T} \w_{\text{spu}}
= \w_{\text{spu}}^{\T} \Sigma^{2,1}_{\text{spu}} \w_{\text{spu}} + \gamma (\mu^{2,1}_{\text{spu}})^{\T} \w_{\text{spu}}.
\end{equation}

Since \eqref{eq:WRI_linear_spu} holds for each pairwise combination of environments, then
\begin{equation} \label{eq:WRI_linear_spu_nonzero}
    \w_{\text{spu}}^{\T} (\Sigma^{i,j}_{\text{spu}} - \Sigma^{j,i}_{\text{spu}}) \w_{\text{spu}} + \gamma (\mu^{i,j}_{\text{spu}}-\mu^{j,i}_{\text{spu}})^{\T} \w_{\text{spu}} = 0  \quad \forall i,j\in[k].
\end{equation}
We define the $\binom{k}{2} \times d_{\text{spu}}$ matrix
\begin{equation*}
    M = \begin{bmatrix}
    \w_{\text{spu}}^{\T} (\Sigma^{1,2}_{\text{spu}} - \Sigma^{2,1}_{\text{spu}}) + \gamma (\mu^{1,2}_{\text{spu}}-\mu^{2,1}_{\text{spu}})^{\T} \\ 
    \w_{\text{spu}}^{\T} (\Sigma^{1,3}_{\text{spu}} - \Sigma^{3,1}_{\text{spu}}) + \gamma (\mu^{1,3}_{\text{spu}}-\mu^{3,1}_{\text{spu}})^{\T} \\
    \vdots \\
    \w_{\text{spu}}^{\T} (\Sigma^{k,k-1}_{\text{spu}} - \Sigma^{k-1,k}_{\text{spu}}) + \gamma (\mu^{k,k-1}_{\text{spu}}-\mu^{k-1,k}_{\text{spu}})^{\T}
    \end{bmatrix}.
\end{equation*}
The environments are in general position, so matrix $M$ is full rank for any nonzero $\w_{\text{spu}}$. That means that there is no nonzero vector $\x$ that solves
\begin{equation} \label{eq:matrix_form}
    M \x = \vec{0}.
\end{equation}
If there is no nonzero solution to \eqref{eq:matrix_form}, then there is no nonzero $\w_{\text{spu}}$ that solves \eqref{eq:WRI_linear_spu_nonzero}. Thus, \eqref{eq:WRI_linear_short} implies that $\w_{\text{spu}} = 0$.
\end{proof}

We now examine the case of density ratio weighting. For a pair of environments $i'$ and $j'$ with the weighting constraint $\alpha(\x_{\text{inv}}) p_{e_{i'}}(\x_{\text{inv}}) = \beta(\x_{\text{inv}}) p_{e_{j'}}(\x_{\text{inv}})$, we let $\alpha(\x_{\text{inv}}) = 1$ so that $\beta(\x_{\text{inv}}) = p_{e_{i'}}(\x_{\text{inv}}) / p_{e_{j'}}(\x_{\text{inv}})$. The weighting function $\alpha$ no longer depends on an environment-specific distribution; instead, it is the same for all pairs of environments that include environment $i'$. This weighting allows us to give an additional result based on a smaller set of equations centered around environment $i'$.

Under density ratio weighting, covariance $\Sigma^{1,i}$ is identical for all environment indices $i$, as is mean $\mu^{1,i}$. This means that if we generalize \eqref{eq:WRI_linear_spu} to all environment pairs with environment 1, the left hand side of the equation is identical. Then for some scalar $t \in \mathbb{R}$,
\begin{equation} \label{eq:WRI_linear_spu_dr}
\w_{\text{spu}}^{\T} \Sigma^{i,1}_{\text{spu}} \w_{\text{spu}} + \gamma (\mu^{i,1}_{\text{spu}})^{\T} \w_{\text{spu}}
= t \quad \forall i\in[k] \setminus 1.
\end{equation}
This system of equations can be written as a $k \times d_{\text{spu}}$ matrix
\begin{equation*}
    M' = \begin{bmatrix}
    \w_{\text{spu}}^{\T} \Sigma^{1,1}_{\text{spu}} + \gamma (\mu^{1,1}_{\text{spu}})^{\T} & 1\\ 
    \w_{\text{spu}}^{\T} \Sigma^{2,1}_{\text{spu}} + \gamma (\mu^{2,1}_{\text{spu}})^{\T} & 1 \\
    \vdots \\
    \w_{\text{spu}}^{\T} \Sigma^{k,1}_{\text{spu}} + \gamma (\mu^{k,1}_{\text{spu}})^{\T} & 1
    \end{bmatrix}.
\end{equation*}
The steps by which the density ratio weighting simplifies matrix $M$ to $M'$ should illustrate how our general position definition can also be stated more simply under this weighting, in line with the definitions given by similar works. Then, a straightforward application of Theorem 10 of \citet{arjovsky2019invariant} or Lemma S4 of \citet{wald2021calibration} gives our final result: that the subset of environments in our setting that are outside of general position has measure zero.

\newpage
%============================================================================
\section{On the sample complexity of different weighting functions}
\label{appendix:sample_complexity}
Our work differs from other works on weighting to mitigate distribution shift. We are the first to propose weighting by functions of the invariant features; this allows us to use weighted risks to recover a spurious-free predictor under shift, whereas most forms of weighting only focus on mitigating shift, without any invariant learning mechanisms. Still, work on the generalization bounds of importance weighting (to mitigate shift) can be readily extended to our case. In this section, we use the arguments of \citet{cortes2010learning} to show that convergence guarantees are improved by selecting weights with a lower variance and lower maximum value. This result explains some of the benefits of using density weights over density ratio weights.

We fix $f \in H$, where $H$ denotes the hypothesis set under consideration. For some weighting functions $\alpha$ and $\beta$ such that $\alpha(\x) p_{e_1}(\x) = \beta(\x) p_{e_2}(\x)$, we define the weighted risk over two environments $e_1$ and $e_2$ to be
\begin{equation}
    \Risk(f) = \EV{p_{e_1}(X,Y)}{\alpha(\x) \ell(f(\x),y) } = \EV{p_{e_2}(X,Y)}{\beta(\x) \ell(f(\x),y) }.
\end{equation}
The empirical weighted risk is defined to be
\begin{equation}
    \begin{aligned}
    \hat{\Risk}_{\alpha}(f) = \frac{1}{\left| \mathcal{D}_{e_1} \right|} \sum_{\x,y \in \mathcal{D}_{e_1}}{\alpha(\x) \ell(f(\x),y)}, \text{ and} \\
    \hat{\Risk}_{\beta}(f) = \frac{1}{\left| \mathcal{D}_{e_2} \right|} \sum_{\x,y \in \mathcal{D}_{e_2}}{\beta(\x) \ell(f(\x),y)}.
    \end{aligned}
\end{equation}

Following the proof of Theorem 1 from \citet{cortes2010learning}, we assume that $\ell(f(\x),y) \in [0,1]$. we define a random variable $Z_{\alpha} = \alpha(X)\ell(f(X),Y) - \Risk(f)$ where $X, Y$ are drawn from $p_{e_1}$. Then, $\sigma^2(Z_{\alpha}) = \EV{p_{e_1}(X,Y)}{\alpha^2(\x)\ell^2(f(\x),y)} - \Risk(f)^2$. Applying Bernstein's inequality yields
\begin{equation}
    \label{eq:density_bound}
    \Pr[\Risk(f) - \hat{\Risk}_{\alpha}(f) > \varepsilon] \leq \exp{\left(\frac{-|\mathcal{D}_{e_1}| \varepsilon^2}{2 \sigma^{2}(Z) + 2 \varepsilon M_{\alpha} / 3}\right)},
\end{equation}
where $M_{\alpha} = \max \{1, \sup_{\x}\alpha(\x)\}$. Setting $\delta$ equal to the bound from \cref{eq:density_bound} and solving for $\varepsilon$, it follows that with probability at least $1 - \delta$ the following bound holds for any $\delta > 0$:
\begin{equation}
\begin{aligned}
\left| \Risk(f) - \hat{\Risk}_{\alpha}(f) \right| &\leq \frac{M_{\alpha} \log{\frac{1}{\delta}}}{3 |\mathcal{D}_{e_1}|} + \sqrt{\frac{M_{\alpha}^2 \log^2{\frac{1}{\delta}}}{9 |\mathcal{D}_{e_1}|^2} + \frac{2 \sigma^2(Z_{\alpha}) \log{\frac{1}{\delta}}}{|\mathcal{D}_{e_1}|}} \\
& \leq \frac{2 M_{\alpha} \log{\frac{1}{\delta}}}{3 |\mathcal{D}_{e_1}|} + \sqrt{\frac{2 \sigma^2(Z_{\alpha}) \log{\frac{1}{\delta}}}{|\mathcal{D}_{e_1}|}}.
\end{aligned}
\end{equation}
Following the same steps, we get a bound for $\left| \Risk(f) - \hat{\Risk}_{\beta}(f) \right|$.
If we combine these events with the union bound, then the probability of both of these bounds holding is at least $1 - 2\delta$.

Note that for density ratio weights, the maximum values $M_{\alpha}$ and $M_{\beta}$ can be large (e.g.\ when a sample is much more rare in one environment than another); this also leads to the variances $\sigma^2(Z_{\alpha})$ and $\sigma^2(Z_{\beta})$ being large. In contrast, density weights tend to have a lower maximum value and lower variance.

We emphasize that this section does not provide a sample complexity bound for our method; it should be treated as a exploratory comparison of different weighting functions only, with the caveat that our result relies on several assumptions that would not hold in practice. We fix the hypothesis $f$, but it would typically be learned from training data. We also assume the optimal weighting functions are known, when these would need to be learned from the data/trained model as well.

\newpage
%============================================================================
\section{Optimization of WRI for the 2D Gaussian case} \label{appendix:WRI_invariant_func}
In the main body of the paper, we show that for a set of invariant weighting functions,  weighted risk invariance is a necessary and sufficient condition for spurious-free prediction. This demonstrates the importance of correcting for \shiftname{}, and extends the importance weighting method to the invariant learning setting. However, we encounter a chicken-and-egg problem in the implementation of our method; specifically, that we weight by the density of the invariant features, without having access to the invariant features in practice. At best, we only have access to the density of the joint features, both spurious and invariant. 

In this section, we show that for a 2D case of our regression setting in \eqref{eq:regression_setting}, optimizing for WRI leads us to learn a model that discards the spurious features, \textit{even when we start with the density of the joint features}. In particular, we see from Figure \ref{fig:WRI_discards_spurious} that optimizing for risk weighted by the joint feature density eventually leads to the weights on the spurious features going to zero, leaving only the weights on the invariant features.

\begin{figure}[h]
    \centerline{\includegraphics[width=0.7\textwidth]{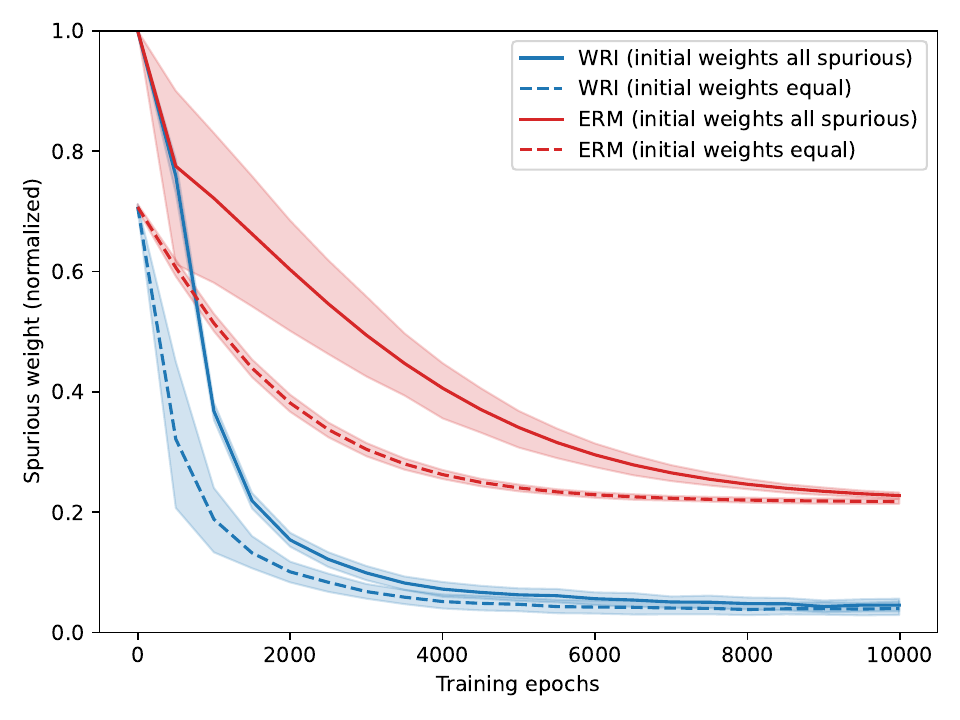}}
    \caption{Given data of the form $[\x_{\text{inv}}, \x_{\text{spu}}]$ following our setting in \eqref{eq:regression_setting}, we train a model with a two-parameter feature layer, where one weight scales the invariant features and one weight scales the spurious features. For both the WRI and ERM methods, we do two initializations where we initialize the model with equal feature weights and with weights on the spurious features only. We find that optimizing WRI always leads to the spurious weights going to zero, whereas optimizing ERM converges to nonzero spurious weights.}
    \label{fig:WRI_discards_spurious}
\end{figure}

\subsection{Experiment details}
The data generation process for this experiment follows \eqref{eq:regression_setting}, where each data point $\x = [\x_{\text{inv}}, \x_{\text{spu}}]$. This data is then used to train a two-layer linear model, where the feature layer has two weight parameters: one weight for the invariant features, and one for the spurious features. The model layers do not have any activation functions, so they are straightforward linear transformations. 

To decouple our investigation from the difficulties associated with density estimation, we assume that $X_{\text{inv}}$ is Gaussian-distributed. In this way, we have a closed form solution for the density of the features. When we optimize for WRI, we therefore weight by the exact feature density.

We use this setup to optimize for both the WRI and ERM methods. To start, we initialize the model with either equal feature weights (where the invariant and spurious feature weights are equal), or with the zero invariant feature weight (so all the weight is on the spurious features). We find that, regardless of initialization, the WRI-trained model learns to reject the spurious weight. This means that as the model trains, we learn to weight by an invariant feature density and predict from the invariant features only. The ERM-trained model, on the other hand, never reaches a nonzero spurious weight.

\subsection{Discussion}
The results of Figure \ref{fig:WRI_discards_spurious} are promising evidence that invariant feature reweighting can be realized in a practical implementation, even though we only have access to the input data and not the invariant features. However, please note that we only show this for a specific case of our regression setting, where the invariant and spurious features are easily separable and the invariant features $X_{\text{inv}}$ follow a Gaussian distribution. If we were to run this experiment on data from an unknown distribution, the results would be less clear: our weighting functions would depend on how correctly we can estimate feature density, an inexact operation in high dimensional cases.

\newpage
%============================================================================
\section{Implementation details} \label{appendix:implementation}
When the optimal invariant predictor was first introduced with IRM, it was first motivated as a constrained optimization problem that searched over all spurious-free predictors for the predictor with the lowest training loss. The practical objective IRMv1 was then introduced, where the constraint was approximated by a gradient penalty. Many follow-up works took a similar path, where a penalty is used as some form of invariance constraint. Our penalty is weighted risk invariance; according to our theory, models with zero weights on spurious features will satisfy the WRI constraint. Therefore, a learning rule that finds the most accurate classifier satisfying WRI will learn an optimal hypothesis that relies on the invariant features alone.

\begin{algorithm}[t]
\caption{WRI with model-based density}
\label{alg:WRI}
\KwData{}
\hspace{1em}$n$: total number of optimization steps\;
\hspace{1em}$\omega$: density update frequency\;
\hspace{1em}$n_d$: number of density update steps \\ \; 
\KwInit{\textup{random model weights for} $f$ \textup{and} $d^e$ \textup{for all} $e \in \mathcal{E}_{tr}$} \;
\KwInit{\textup{Adam~\citep{kingma2017adam} optimizer for} $f$} \;
\KwInit{\textup{Adam optimizer for $d^e$}} \;
\For{$i = 1 \dots n$}{
  sample featurized minibatch from all environments\;
  compute $\mathcal{L}$ \eqref{eq:empirical_obj} on minibatch and step $f$ optimizer \;
  {\If{$i - 1$ \textup{~is a multiple of~} $\omega$}{
      \KwRepeat{$n_d$}{
        sample featurized minibatch from all environments \;
        compute $\mathcal{L}$ on minibatch and step $d^e$ optimizer \;
      }
  }}
}
\end{algorithm}

We implement this learning rule with alternating minimization; our specific algorithm is described in Algorithm~\ref{alg:WRI}, and a graphical representation of our network architecture is shown in Figure~\ref{fig:network_overview}. This is a complex optimization procedure, and we do not derive guarantees on its convergence to invariant representations in this work. We simply motivate this implementation with the knowledge that a spurious-free predictor would minimize the WRI penalty, which we optimize for when the penalty coefficient in \eqref{eq:empirical_obj} is sufficiently large. We test that optimizing WRI on the observable joint feature density still focuses model weights on the invariant features in \cref{appendix:WRI_invariant_func}. In the main body of the paper, we empirically show that our objective is effective at recovering a generalizable predictor, as well as density estimates with a meaningful signal for OOD detection. Later in this section (Appendix \ref{appendix:density_estimation}), we also test the quality of our density estimates for a 2D case of our regression setting in \eqref{eq:regression_setting}.

\subsection{Optimizing WRI leads to invariant features}
To provide additional intuition on why optimizing for WRI leads to learning invariant features, let us consider trying to reweight a representation $\Phi(X)$ that is not invariant. Specifically, assume that for two environments, $p_1(Y \mid \Phi(X)) \neq p_2(Y \mid \Phi(X))$. In what follows, we demonstrate why WRI would reject such a representation. We focus on density ratio reweighting for simplicity, so that $\tilde{p} = p_1(\Phi(X)) \cdot p_2(Y \mid \Phi(X)))$.

To satisfy invariance under this reweighting, we must have $\tilde{p}(Y \mid \Phi(X) ) = p_1(Y \mid \Phi(X))$. Clearly this does not hold, since $\tilde{p}(Y \mid \Phi(X)) = p_2(Y \mid \Phi(X))$, and this lack of invariance can be identified from data under mild assumptions on overlap. Therefore, enforcing this type of weighted invariance constraint (i.e.\ matching of the conditionals $p_1(Y \mid \Phi(X))$ and the density ratio weighted distribution $\tilde{p}(Y \mid \Phi(X))$) would rule out a non-invariant representation $\Phi(X)$. Conversely, an invariant representation would naturally satisfy this constraint.

The gap between this constraint and weighted risk invariance lies in the fact that we only enforce invariance of the risk, not the full conditionals (we do this because imposing risk invariance is more tractable in practice). While the two are not equivalent, there are cases where they coincide. For example, Theorems 1 and 2 and the experiments of \citet{krueger2021out} support the claim that risk invariance learns an invariant predictor.

A subtle but important point is that these theorems presuppose the existence of a risk invariant predictor. Under the \shiftname{} we consider, some invariant features must be discarded to obtain a risk invariant predictor (e.g.\ in the 2D Gaussian case, a risk invariant predictor would have to discard the causal feature completely). With weighted risk invariance, this issue is alleviated. By using (for example) density ratio reweighting, the optimal invariant predictor becomes risk invariant again.

\subsection{Practical strategies}
\label{appendix:implementation_strategies}
We observe that the magnitude of the WRI regularization term from \eqref{eq:empirical_obj} varies significantly in practice, making it difficult to choose a good value for $\lambda$. This is true even when the true invariant density is known. We employ two strategies to deal with this. First, we constrain the density model to predict values within a pre-defined range by applying a sigmoid activation on the final prediction with constant scale and shift factor. Additionally, we divide the WRI term by the average negative log-likelihood, which also helps to decouple the empirical risk from the WRI regularization term.

We allow different optimization parameters for the prediction model and density estimation models. Both optimizers have a different learning rate, weight decay, batch size, and $\lambda$ penalty. (The range of hyperparameter values we use are provided with the DomainBed details in Appendix~\ref{sec:domain_bed_appendix}.)

\begin{figure}[t]
    \centerline{\includegraphics[clip, trim=0cm 5.0cm 7.0cm 0cm, width=1.05\textwidth]{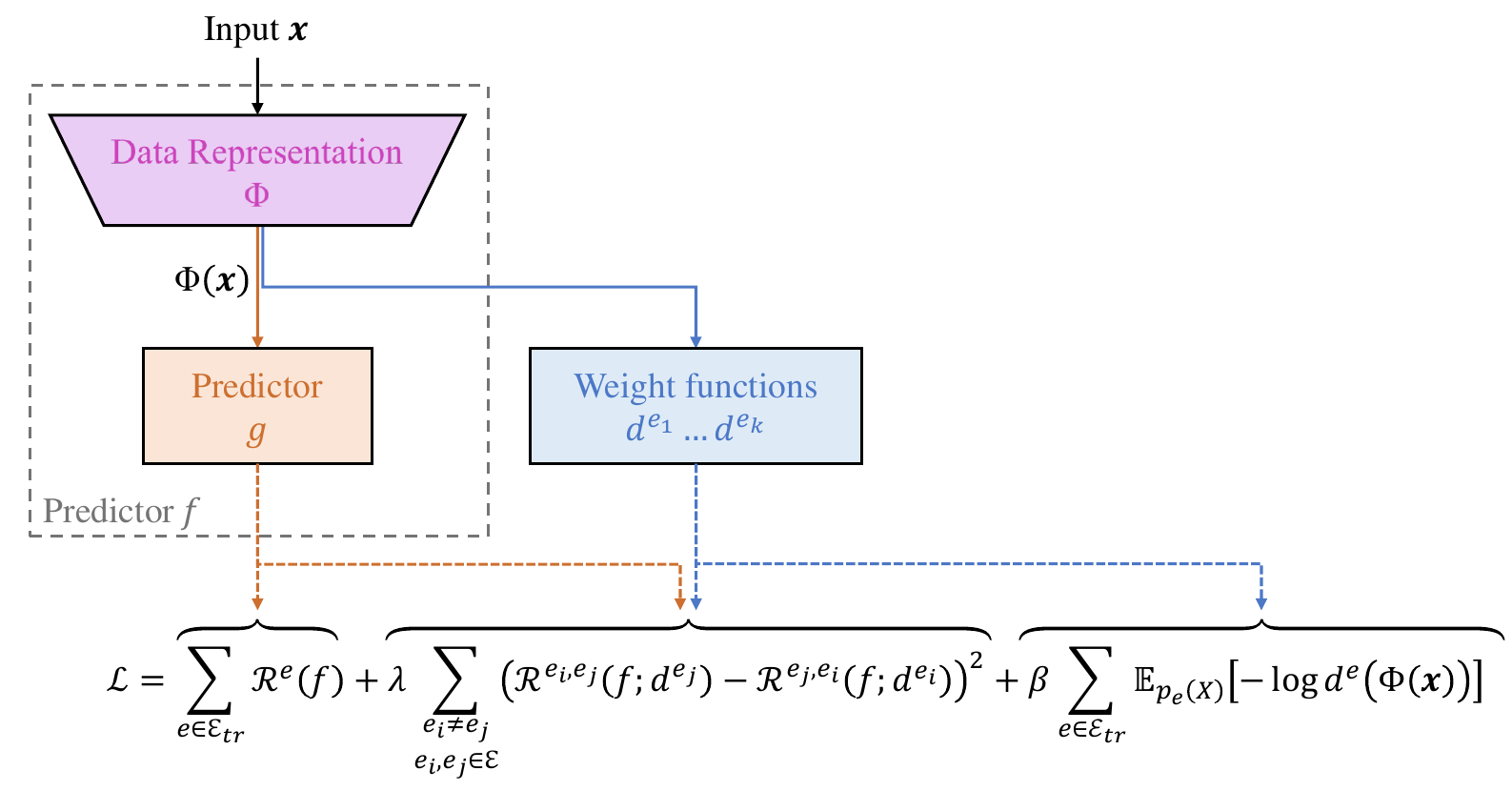}}
    \caption{Graphic overview of our network architecture. The red lines indicate back-propagation paths for components that impact the first two terms. The blue lines indicate the paths for the last two terms.}
    \label{fig:network_overview}
\end{figure}
% \FloatBarrier
% \newpage

\subsection{Density estimation quality}
\label{appendix:density_estimation}
Optimizing WRI with exact feature density values successfully recovers an invariant predictor, as we show in Appendix \ref{appendix:WRI_invariant_func}. Yet, density values are often estimated with some inaccuracy in practice. In this section, we investigate the accuracy of our alternating minimization method in estimating feature densities. We use the Gaussian case of our regression setting in \eqref{eq:regression_setting}, as detailed in Appendix \ref{appendix:WRI_invariant_func}. Combining this with a linear model for feature learning provides a closed-form solution for the feature density that we can compare against our estimated densities at each alternating step

\textbf{Learning the parameters of a density model  } When we know the distribution family we are estimating the densities of, we should set up our density estimation model so that we are learning the parameters of the same distribution. In this case, the log penalty pulls us toward a maximum likelihood estimate of the parameters that would generate the observed feature densities.

\textbf{Learning an unconstrained density estimation model  } When we make no assumptions about the distribution family, we opt to use a multilayer perceptron (MLP) to model our unnormalized density models, with a sigmoid activation function to limit the minimum and maximum output. As we specify in \eqref{eq:empirical_obj}, we fit our density models with the sum of the WRI penalty and a negative log penalty. The WRI penalty attempts to fulfill the weighted risk constraint (where learned weighting/functions $\alpha$ and $\beta$ satisfy $\alpha(\x) p_{e_i}(\x_{inv}) = \beta(\x) p_{e_j}(\x_{inv})$). Since the total mass is not fixed with this model, the log penalty primarily acts as a regularization to disincentivize low estimates.

After the density models are trained and the unnormalized density estimates computed, we scale the density estimates by a constant of proportionality $C$ to obtain \textit{proportional density estimates}, where $C$ is the scalar that minimizes the mean squared error (MSE) between the estimated and exact densities. We then normalize the MSE between the proportional density estimates and the exact densities, so that an error of one corresponds to a density estimator that predicts a constant (which would reduce the WRI penalty to the VREx penalty). We show how the normalized/relative MSE of the proportional density estimates goes down as we train in Figure \ref{fig:altmin_density_error}.
\begin{figure}[h]
    \centerline{\includegraphics[width=0.6\textwidth]{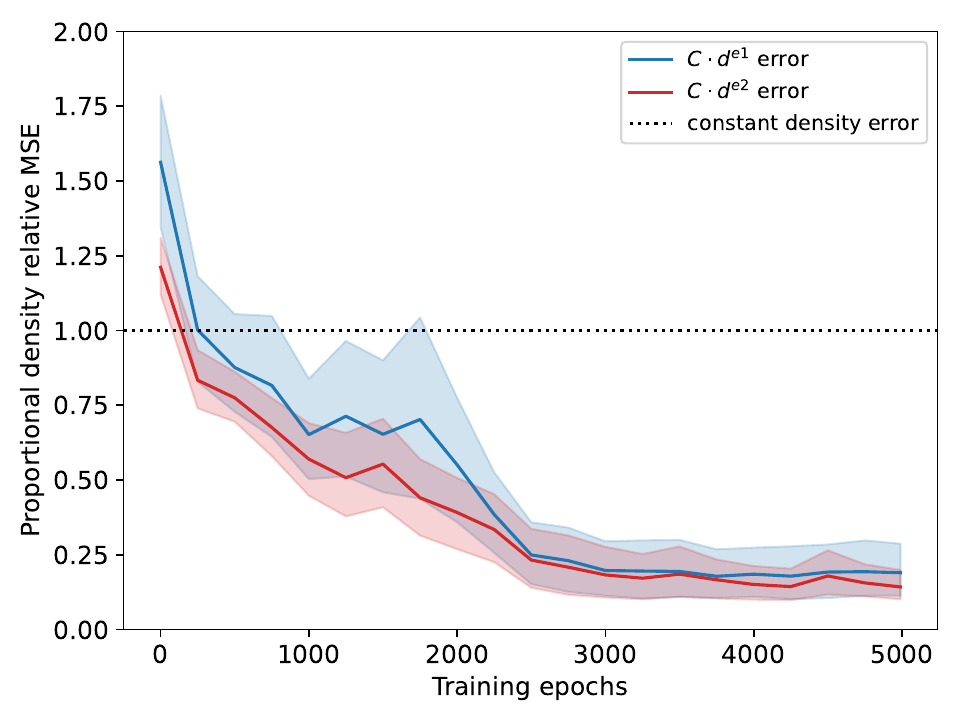}}
    \caption{Under alternating minimization with an unconstrained density estimation model, the relative MSE of the proportional density estimates goes down over time. \vspace{-1.8em}}
    \label{fig:altmin_density_error}
\end{figure}
\FloatBarrier

\paragraph{Where are density estimates expected to be accurate?} To understand how the accuracy of the density estimates is impacted by the support of the environments, consider a data model with two environments: $e_1$ and $e_2$. The density estimate $d^{e_2}$ is evaluated only on samples from environment $e_1$, and similarly the density estimate $d^{e_1}$ is evaluated only on samples from environment $e_2$. For this reason, we expect the WRI penalty to incentivize accurate density estimates for $d^{e_1}$ where the ground truth density $p_{e_2}$ is high, and for $d^{e_2}$ where $p_{e_1}$ is high. In regions where the density $p_{e_2}$ is small, we expect the learning of $d^{e_1}$ to be influenced by other factors, including the log penalty and the tendency toward learning smooth functions (due to other regularization, the continuous nature of commonly used activation functions, etc.) The combination of these factors results in the extrapolation of the density estimates outside the region of support overlap, as shown in Figure \ref{fig:density_estimate_vs_gt}. This extrapolation does not significantly impact the WRI penalty, as at least one probability density will be low in this region, so the penalty will be low as well.
\begin{figure}[h]
    \centerline{\includegraphics[width=0.55\textwidth]{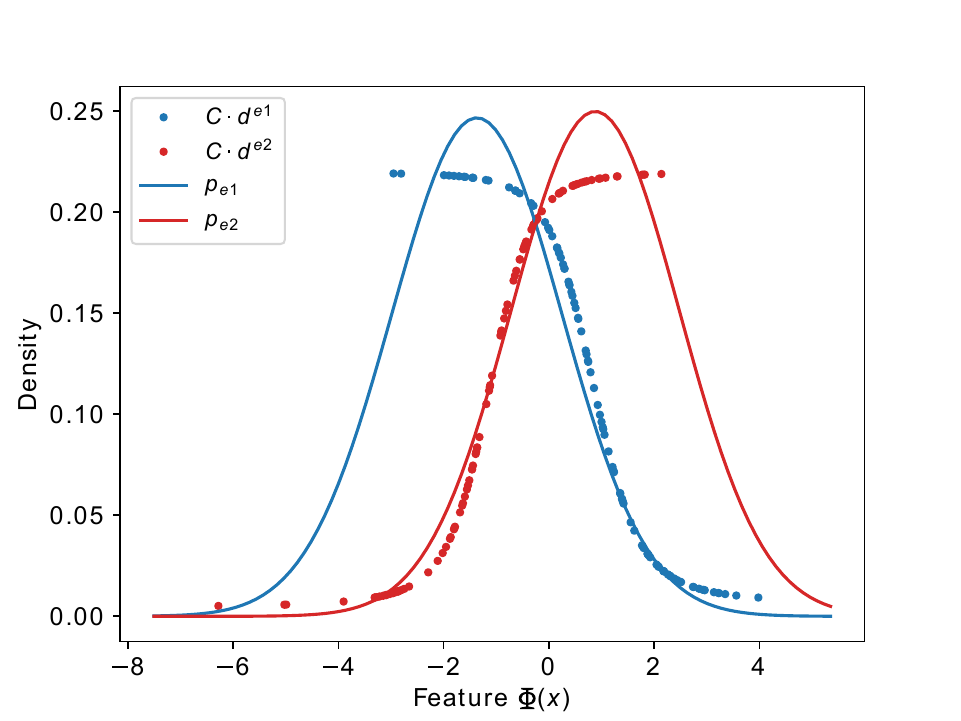}}
    \caption{We train an unconstrained density estimation model for 3000 epochs (to convergence) and compare the proportional density learned for each feature to the ground truth density of the invariant features. We see that for a given environment $e_i$, the learned density accurately approximates the shape of the ground truth density where it overlaps with the other environment $e_j$.}
    \label{fig:density_estimate_vs_gt}
\end{figure}

\newpage
% ============================================================================
\section{Extensions on experiments and additional experimental details}  \label{appendix:experiments}

\subsection{Toy dataset hyperparameters (Figure~\ref{fig:linear_marginal_differs} details)}
To produce Figure~\ref{fig:linear_marginal_differs}, we define a toy dataset consisting of the three environments described here. Environments 1 and 2 are used for training and environment 3 is used as test. We let $Y=\mathbbm{1}[X_{\text{inv}} + \varepsilon > 0]$, where $\varepsilon$ is a standard normal random variable and $\mathbbm{1}$ is the indicator function. The following data distributions are used:
\begin{align*}
    X^1_{\text{inv}}         &\sim N(0, 2^2),                            & X^2_{\text{inv}}         &\sim N(0, \left(\tfrac{1}{2}\right)^2),   & X^3_{\text{inv}}         &\sim N(0, 3^2),  \\
    X^1_{\text{spu}} | Y=0   &\sim N(1, \left(\tfrac{1}{2}\right)^2),    & X^2_{\text{spu}} | Y=0   &\sim N(1, 2^2),                           & X^3_{\text{spu}} | Y=0   &\sim N(-1, 1^2), \\
    X^1_{\text{spu}} | Y=1   &\sim N(-1, \left(\tfrac{1}{2}\right)^2),   & X^2_{\text{spu}} | Y=1   &\sim N(-1, 2^2),\textup{ and}                          & X^3_{\text{spu}} | Y=1   &\sim N(1, 1^2).
\end{align*}
We sample $10^4$ points from each environment. The predictor is a simple linear model. The ERM, IRM, VREx, and WRI objectives are optimized using scikit-learn \citep{scikit-learn}. We use a penalty weight of $10^5$ for IRM, a penalty weight of $1$ for VREx, and a penalty weight of $1500$ for WRI. We note that VREx converges to an absurd solution when using a larger weight, despite generally using a much larger weight in the DomainBed implementation. The penalty weights were roughly optimized by eye to achieve the best qualitative results (regarding invariance).

\subsection{Experiments on multi-dimensional synthetic dataset}
\label{sec:experiments_linear}
\label{appendix:experiments_multi}
\begin{figure}[t]
    \centering
    \includegraphics[width=1\textwidth]{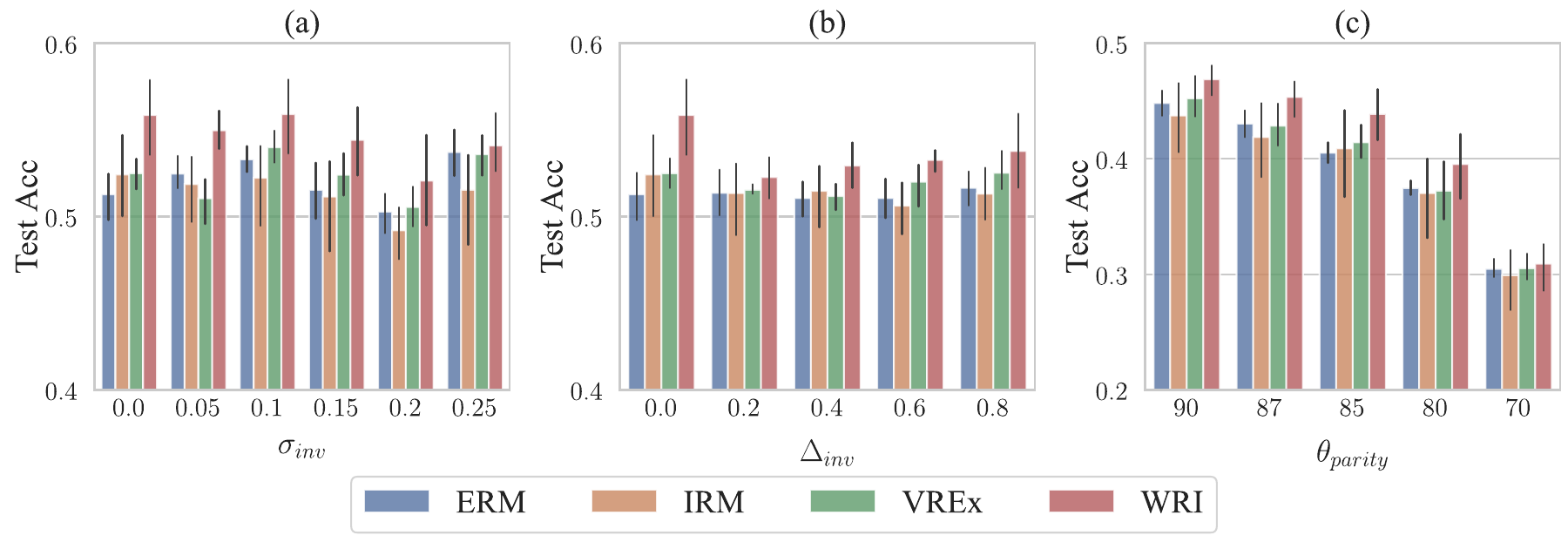}
    \caption{Our simulated classification setting from \S \ref{sec:experiments_linear} where we measure the test accuracy under different shifts in the invariant features. (a) We vary the average distance between invariant distributions. (b) We vary the covariance matrices between invariant distributions. (c) With $\sigma_{\text{inv}}=0.15$, $\Delta_{\text{inv}}=0.5$, we vary the correlation of spurious features between the training environments with test having the opposite correlation. In all cases, the WRI predictor outperforms other methods on average.}
    \label{fig:experiments_simulation}
    \vspace{-0.8em}
\end{figure}

\textbf{Data generated following structural equation model~~} We construct a multi-class classification simulation based on the linear causal model shown in Figure~\ref{fig:scm}. The spurious and invariant distributions are drawn from normal distributions $X_{\text{inv}} \sim N(\mu^e, \Sigma^e)$ and $X^e_{\text{spu}} | Y=y \sim N(\mu^{e,y}, \Sigma^{e,y})$ and $X$ is the concatenation of $X_{\text{inv}}$ and $X_{\text{spu}}$. To simulate the classification scenario, we define the class label to be $Y = \argmax_{y} \w_y^{\T} \x_{\text{inv}} + \varepsilon_y$ where $\varepsilon_y \sim N(0, \sigma_y)$.

Rather than specifying all of the distribution parameters manually, we sample them randomly according to
\begin{equation}
\label{eq:sim_hyperparams}
\begin{aligned}
    \mu^{e} &\sim N(0, I\sigma^2_{\text{inv}}), ~~~~~~ & \Sigma^{e} &= RandCov(1-\Delta_{\text{inv}}, 1+\Delta_{\text{inv}}), ~~~~~& \w_y &\sim U(\mathbb{S}_{d_{\text{inv}}}), \\
    \mu^{e,y} &\sim N(0, I\sigma^2_{\text{spu}}), & \Sigma^{e,y} &= RandCov(\tfrac{1}{2}, \tfrac{3}{2}), & \sigma_y &= \sigma_y
\end{aligned}
\end{equation}
where $\mathbb{S}_{d_{\text{inv}}}$ is the $d_{\text{inv}}$-dimensional hypersphere, and $RandCov(a,b)$ is a covariance matrix-generating random variable that selects the square root of the eigenvalues i.i.d. from $U(a, b)$.

This formulation allows us to specify the data simulation using four scalar values: $\sigma_{\text{inv}}$, $\sigma_{\text{spu}}$, $\Delta_{\text{inv}}$, and $\sigma_y$. The $\sigma_{\text{inv}}$ and $\sigma_{\text{spu}}$ terms define the variability of the means between different invariant and spurious distributions respectively. For example, when $\sigma_{\text{inv}}$ is zero, $X_{\text{inv}}$ will have the same mean value across all environments. $\Delta_{\text{inv}}$ is a value between 0 and 1 that defines how much the invariant covariance matrices vary between environments, and $\sigma_y$ specifies the noise in the label generation process. 

Figure \ref{fig:experiments_simulation} shows how we compare to other baselines when we run on data with 20 dimensions ($d_{\text{inv}}=10$), with 4 training environments and 1 test. We see that (a) with isotropic covariance, we outperform other baselines when the invariant distribution means overlap and when we increase the distance between the means. Further, (b) with identical means, we outperform other baselines when the invariant distributions all have isotropic covariance and when the covariance matrices vary between distributions. Both cases represent the covariate shift in the invariant features, which our predictor is designed to be more robust to. On average, our improvement over the baselines is more significant when we have isotropic covariance for all environments and increase the distance between the environment means; we believe this is because the isotropic covariance allows for a clearer (more controlled) case of covariate shift.

\textbf{Testing dependence on controlled spurious correlation~~} We run a modified version of our simulation that tests the misleading nature of the spurious features. Similar to the ColoredMNIST approach of defining the train and test environments to have opposite color-label correlation, we randomly select a vector for each class and ensure that its correlation with the mean of the spurious distributions is positive for each training environment and negative for the test environment. This is accomplished by first sampling the data as in \eqref{eq:sim_hyperparams}, flipping correlation if necessary, then applying a transform to ensure that the means are in a hypercone with opening angle $\theta_{parity}$. As $\theta_{parity}$ decreases from $90^\circ$, the spurious data increasingly predicts $Y$ consistently across training environments, making it more challenging to disentangle the spurious and invariant features. In Figure \ref{fig:experiments_simulation}(c), we see that although decreasing $\theta_{parity}$ degrades the performance of all methods, WRI still performs better than the other baselines. 

\textbf{Additional discussion of simulation hyperparameters~~} Here, we further discuss the hyperparameters $\sigma_{\text{inv}}$, $\Delta_{\text{inv}}$, $\sigma_{y}$, and $\sigma_{\text{spu}}$, and how they impact the data generation.

$\sigma_{\text{inv}}$ is a non-negative scalar that encapsulates the distance between invariant data for each environment. When this value is zero, the invariant distributions are all centered on the origin; when this value is large enough, the average distance between invariant distributions increases.

$\Delta_{\text{inv}}$ is a scalar in $[0, 1]$ that encapsulates how much the covariance matrices of the invariant distributions differ between environments. When this value is zero, all environments have identity covariance. Intuitively, this value is similar to $\sigma_{\text{inv}}$ in the sense that larger values will produce invariant distributions that overlap less between environments.

$\sigma_y$ is the only directly specified parameter of the data model and it controls the amount of label noise present. When set to zero, the label is a deterministic function of $X_{\text{inv}}$; when set very high, $Y$ becomes more difficult to predict from the invariant data.

$\sigma_{\text{spu}}$ is a non-negative scalar that encapsulates the distance between spurious data distributions. As this value gets larger, the spurious distributions are less likely to overlap both between classes and between environments. This has two primary effects. First, within a single environment, less overlap will make spurious data more predictive of the label. However, because the locations of the distributions change between environments, this also means that the spurious nature of $X_{\text{spu}}$ should be more apparent during training.

One caveat to our method of data generation is that controlling for a uniform number of classes is difficult since the class label is a function of $X_{\text{inv}}$. For this reason, we sample a new set of data model parameters when one class is overrepresented in any environment by a factor of $1.5$ times the expected uniform representation (e.g. for 5 classes, we resample when more than $30\%$ of data from any environment have the same label). This also ensures that any models with accuracy above 1.5 times uniform are making non-trivial predictions.

\textbf{Controlling the spurious correlation~~} For the experiment reported in Figure~\ref{fig:experiments_simulation} (c), we explicitly manipulate the data model to ensure a higher level of spurious correlation via the $\theta_{parity}$ hyperparameter. This produces a scenario similar to ColoredMNIST, where the correlation between the label and spurious data (color) is relatively consistent between training environments ($+90\%$ and $+80\%$) but has the opposite relationship in test ($-90\%$). This means that an algorithm that learns to rely on color will be heavily penalized at test time. In this section, we describe the mechanism behind $\theta_{parity}$ in more detail.

We begin by sampling distribution parameters for all the invariant and conditional spurious datasets (the same as in prior experiments). Next, we select a random unit vector $\vec{c}_y \sim U(\mathbb{S}_{d_{\text{spu}}})$ for each class label. Our goal is to ensure that the spurious mean $\mu^{e,y}$ correlates positively with $\vec{c}_y$ for each of the training environments and negatively with the test environment. In other words, the training and test environments have opposite \textit{parity} w.r.t. the hyperplane. We achieve this by negating $\mu^{e,y}$ when necessary, i.e.
\begin{equation}
\mu^{e,y} \leftarrow 
\begin{cases}
    \textup{sign}(\mathbf{c}^{\T}_y\mu^{e,y})\mu^{e,y} & \textup{if } e \in \mathcal{E}_{tr} \\
    -\textup{sign}(\mathbf{c}^{\T}_y\mu^{e,y})\mu^{e,y} & \textup{otherwise}
\end{cases}.
\end{equation}

While this ensures the desired correlation, the relationship is not very strong due to the fact that random vectors in high-dimensional space tend to be nearly orthogonal. Therefore, we introduce $\theta_{parity}$.

For each class, we transform $\mu^{e,y}$ again to ensure it is inside the hypercone with axis $\vec{c}_y$ and opening angle $\theta_{parity}$. This is accomplished by rotating $\mu^{e,y}$ towards the axis so that the angle to the axis is scaled by a factor of $\theta_{parity} / 90^\circ$. The identity transform is, therefore, synonymous with $\theta_{parity} = 90^\circ$, and as $\theta_{parity}$ decreases towards $0^\circ$ the spurious data becomes more consistently predictive of the label.

\textbf{Numerical Results~~} For experiments on the synthetic datasets, we integrate the DomainBed implementations of IRM and VREx. All experiments use 20 dimensional data (10 invariant dimensions and 10 spurious dimensions). They also all use 5 environments (1 test and 4 training) and 5 class labels. The numerical values visualized in the Figure~\ref{fig:experiments_simulation} bar graphs are shown in Tables~\ref{table:sweep_sim1}, ~\ref{table:sweep_sim2}, and ~\ref{table:sweep_sim3}.

\begin{table}
\begin{minipage}{1\textwidth}
    \caption{Sweep $\sigma_{\text{inv}}$ ($\Delta_{\text{inv}}=0, \sigma_{y}=0.5, \sigma_{\text{spu}}=1$). Plotted in Figure~\ref{fig:experiments_simulation} (a)}
    \label{table:sweep_sim1}
    \centering
    \adjustbox{max width=\textwidth}{
    \begin{tabular}{lccccccc}
    \toprule
    & \multicolumn{7}{c}{$\bm{\sigma_{\text{inv}}}$} \\
    %     \cmidrule(lr){2-8}
    \textbf{Algorithm} & \textbf{0.0} & \textbf{0.05} & \textbf{0.1} & \textbf{0.15} & \textbf{0.2} & \textbf{0.25} & \textbf{0.3} \\
    \midrule
    % Oracle & 67.8 ± 0.3 & 68.2 ± 0.3 & 69.3 ± 0.3 & 70.4 ± 0.2 & 68.8 ± 0.6 & 67.4 ± 2.0 & 68.6 ± 0.9 \\
    ERM & 51.3 ± 1.6 & 52.5 ± 1.1 & 53.3 ± 0.9 & 51.5 ± 2.0 & 50.3 ± 1.3 & 53.7 ± 1.4 & 54.8 ± 2.1 \\
    IRM & 52.4 ± 2.7 & 51.9 ± 2.3 & 52.2 ± 2.7 & 51.2 ± 3.2 & 49.2 ± 1.8 & 51.5 ± 3.2 & 54.3 ± 3.3 \\
    VREx & 52.5 ± 1.0 & 51.1 ± 1.5 & 54.0 ± 1.0 & 52.4 ± 1.4 & 50.5 ± 1.3 & 53.6 ± 1.3 & 56.8 ± 1.1 \\
    % WRI-gt & 54.8 ± 0.9 & 54.6 ± 1.3 & 55.6 ± 1.6 & 55.5 ± 1.0 & 53.7 ± 1.0 & 54.8 ± 0.8 & 55.1 ± 1.2 \\
    \midrule
    WRI & 55.9 ± 2.5 & 55.0 ± 1.3 & 55.9 ± 2.3 & 54.4 ± 2.3 & 52.1 ± 3.1 & 54.1 ± 1.8 & 56.1 ± 2.6 \\
    \bottomrule
    \end{tabular}}
\end{minipage}
\begin{minipage}{1\textwidth}
    \vspace{1em}
    \caption{Sweep $\Delta_{\text{inv}}$ ($\sigma_{\text{inv}}=0, \sigma_{y}=0.5, \sigma_{\text{spu}}=1$). Plotted in Figure~\ref{fig:experiments_simulation} (b)}
    \label{table:sweep_sim2}
    \centering
    \adjustbox{max width=\textwidth}{
    \begin{tabular}{lccccc}
    \toprule
     & \multicolumn{5}{c}{$\bm{\Delta_{\text{inv}}}$} \\
    \textbf{Algorithm} & \textbf{0.0} & \textbf{0.2} & \textbf{0.4} & \textbf{0.6} & \textbf{0.8} \\
    \midrule
    % Oracle & 67.8 ± 0.3 & 67.6 ± 0.7 & 67.6 ± 0.4 & 68.9 ± 0.3 & 68.9 ± 0.3 \\
    ERM & 51.3 ± 1.6 & 51.4 ± 1.5 & 51.1 ± 1.2 & 51.1 ± 1.3 & 51.7 ± 1.2 \\
    IRM & 52.4 ± 2.7 & 51.3 ± 2.4 & 51.5 ± 2.1 & 50.6 ± 1.7 & 51.3 ± 1.8 \\
    VREx & 52.5 ± 1.0 & 51.6 ± 0.3 & 51.2 ± 0.9 & 52.0 ± 1.5 & 52.5 ± 1.3 \\
    % WRI-gt & 54.8 ± 0.9 & 55.2 ± 1.1 & 55.3 ± 0.8 & 54.7 ± 2.0 & 52.2 ± 1.2 \\
    \midrule
    WRI & 55.9 ± 2.5 & 52.3 ± 1.4 & 52.9 ± 1.6 & 53.3 ± 0.7 & 53.8 ± 2.5 \\
    \bottomrule
    \end{tabular}}
\end{minipage}
\begin{minipage}{1\textwidth}
    \vspace{1em}
    \caption{Sweep $\theta_{parity}$ ($\sigma_{\text{inv}}=0.15, \Delta_{\text{inv}}=0.5, \sigma_{y}=0.5, \sigma_{\text{spu}}=1$). Plotted in Figure~\ref{fig:experiments_simulation} (c)}
    \label{table:sweep_sim3}
    \centering
    \adjustbox{max width=\textwidth}{
    \begin{tabular}{lccccc}
    \toprule
     & \multicolumn{5}{c}{$\bm{\theta_{parity}}$} \\
    \textbf{Algorithm} & $\bm{90^\circ}$ & $\bm{87^\circ}$ & $\bm{85^\circ}$ & $\bm{80^\circ}$ & $\bm{70^\circ}$ \\
    \midrule
    % Oracle & 67.8 ± 0.3 & 67.6 ± 0.7 & 67.6 ± 0.4 & 68.9 ± 0.3 & 68.9 ± 0.3 \\
    ERM &  44.8 ± 1.2 & 43.0 ± 1.4 & 40.5 ± 1.1 & 37.5 ± 0.7 & 30.5 ± 0.9 \\
    IRM &  43.8 ± 3.6 & 41.9 ± 3.9 & 40.9 ± 4.2 & 37.0 ± 4.0 & 29.9 ± 3.0 \\
    VREx & 45.2 ± 2.0 & 42.9 ± 1.9 & 41.4 ± 1.6 & 37.2 ± 2.7 & 30.5 ± 1.3 \\
    \midrule
    WRI &  46.9 ± 1.5 & 45.4 ± 1.9 & 43.9 ± 2.6 & 39.5 ± 3.2 & 30.9 ± 2.2 \\
    \bottomrule
    \end{tabular}}
\end{minipage}
\end{table}

\subsection{Heteroskedastic CMNIST}
\label{sec:cmnist_appendix}

ColoredMNIST (CMNIST) is a dataset proposed by \citet{arjovsky2019invariant} as a binary classification extension of MNIST where the shapes of the digits are invariant features and the colors of the digits are spurious features that are more tightly correlated to the label than the shapes are, with the correlation between the color and the label being reversed in the test environment. Specifically, digits 0--4 and 5--9 are classes 0 and 1 respectively; after injecting 25\% label noise to all environments, the red digits in the first two environments have an 80\% and 90\% chance of belonging to class 0, while the red digits in the third environment have a 10\% chance. The design of the dataset allows invariant predictors (that only base their predictions on the shape of the digits) to outperform predictors that use spurious color information, ideally achieving 75\% accuracy on all environments. 

\textbf{Heteroskedastic CMNIST with Covariate Shift~~} Variants of CMNIST have been proposed, with the aim of incorporating additional forms of distribution shift \citep{ahuja2020invariant, wu2020representation, krueger2021out}. To demonstrate the efficacy of our method under heteroskedastic covariate shift, we construct a heteroskedastic variant of CMNIST, which we call HCMNIST, where label flips occur for digits 0, 1, 5, and 6 with probability 5\% and label flips occur for other digits with probability 25\%. Additionally, we generate a variant of HCMNIST with covariate shift, HCMNIST-CS, where we redistribute data among environments. We place 65\% of the digits 0, 1, 5, and 6 into the first training environment and 5\% into the second environment (with the remaining 30\% in test). The remaining digits are distributed so that all environments have the same number of samples.

The empirical results on the HCMNIST with and without covariate shift are shown in Table~\ref{table:HCMNIST_experiments}. These experiments use the practical version of the WRI algorithm and are computed using the same evaluation strategy as DomainBed. The results demonstrate that the practical WRI implementation performs well in both the heterogeneous case with and without covariate shift. Conversely, both IRM and VREx have significant degradation when covariate shift is introduced.

We also create an idealized version of these datasets that simplifies the images into two-dimensional features consisting of the digit value and the color. We evaluate the WRI penalty and VREx penalty using three optimal predictors: one that operates on only the digit value (invariant), one that operates on the color value (spurious), and one that operates on both. The results of this experiment are presented in Table~\ref{table:HCMNIST_idealized}. As expected, we find that both WRI and VREx have zero penalty on HCMNIST. However, in the presence of covariate shift, the VREx penalty for the invariant predictor is greater than both the penalty for the spurious classifier and the penalty for the mixed classifier. This demonstrates a simple and concrete case where VREx does not select for an invariant classifier, suggesting that the degradation in VREx performance under shift in Table~\ref{table:HCMNIST_experiments} can be attributed to a true failure of VREx to recover the invariant predictor.

\begin{figure}[t]
    \centerline{\includegraphics[clip, trim=0cm 3cm 0cm 3cm, width=\textwidth]{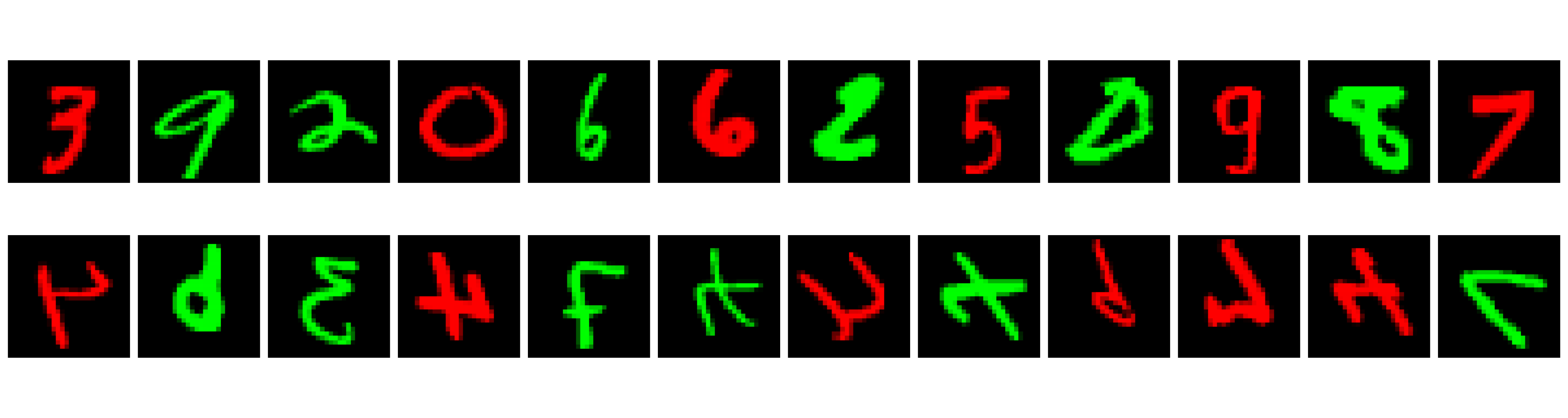}}
    \caption{Examples of CMNIST in-distribution (top) and out-of-distribution (bottom) used for OOD detection. The out-of-distibution samples are generated by flipping a subset of digits horizontally or vertically.}
    \label{fig:cmnist_ood_examples}
\end{figure}

\subsection{Out-of-distribution Detection with CMNIST}
\label{sec:appendix_ood_cmnist}
To evaluate the out-of-distribution (OOD) detection performance on CMNIST, we first train a model on the two training environments of (traditional) CMNIST, then assess the model's ability to classify sample digits as in-domain or out-of-domain on a modified version of the CMNIST test environment. The modified test environment is a mix of unaltered digits for the ``in-domain'' samples and flipped digits for the ``out-of-domain" samples. Specifically, we horizontally flip digits 3, 4, 7, and 9, and vertically flip digits 4, 6, 7, and 9. In this way, we create CMNIST samples that would not exist in the training distribution but still appear plausible. Figure~\ref{fig:cmnist_ood_examples} provides examples of these in-domain and out-of-domain samples. Note that we only flip digits that are unlikely to be confused with real digits after flipping.

In order to measure OOD performance, we require an OOD score for each sample that predicts if the sample is in-distribution. Each OOD score is then compared against its in-distribution label to compute a Receiver Operating Characteristic (ROC) curve. This curve captures the trade-off between the true positive rate and false positive rate at various confidence thresholds. Lacking an explicit density estimate for ERM, IRM, and VREx, we instead use the maximum class prediction score for the OOD score. We then compare the effectiveness of WRI's density estimate as an OOD score against ERM, IRM, and VREx. Finally, we calculate the area under the ROC curve (AUROC) as an aggregate metric. The results, presented in Table~\ref{table:CMNIST_ROC}, indicate that the WRI density estimates more accurately detect OOD digits than the prediction confidence scores from the other methods.

\subsection{DomainBed experimental details and results}
\label{sec:domain_bed_appendix}
\subsubsection{Featurizer}
We use a ResNet50 model to train featurizers using the default ERM DomainBed parameters for each dataset. We train one featurizer corresponding to each test environment with both 32 and 64 dimensional features. Each featurizer is trained on the training environments for a fixed number of steps. Once trained, the features are pre-computed and all experiments are then performed on these features. The dimensionality of the features used is selected randomly as a hyperparameter. For each experiment, we use a single hidden-layer MLP with ReLU activation which operates directly on ResNet features.

\subsubsection{Hyperparameters}
Since we are training on pre-extracted features, running for 5000 steps is unnecessary. Instead, we reduce the total number of steps per experiment to 500 and increase the default learning rate to $\expneg{1}{2}$. Because we have fewer steps than the original (non-featurized) DomainBed implementation, we reduce the annealing iterations on IRM and VREx by a factor of $10$ and limit the maximum annealing steps to $500$. Otherwise, VREx and IRM would nearly always finish training before annealing has finished. Additionally, we define $\Phi$ to be an identity function, as we found this to produce the most consistent results.

We specified the following DomainBed hyperparameters and selected them according to the following distributions. Note that in DomainBed, the default hyperparameters are used during the first hyperparameter seed, and random hyperparameters are selected for subsequent seeds. For any hyperparameters not listed here, we use the defaults provided by DomainBed.

\begin{itemize}
\item Learning-rate - default: $\expneg{1}{2}$, random: log-uniform over $[\expneg{5}{3}, \expneg{1}{1}]$.
\item IRM-anneal iters - default $50$, random: log-uniform over $[1, 500]$. Number of steps before penalty weight is increased.
\item VREx-anneal iters - default $50$, random: log-uniform over $[1, 500]$. Number of steps before penalty weight is increased.
\item Featurizer dimensions - default: $64$, random: discrete uniform from $\{32, 64\}$. The dimensionality of the pretrained features used.
\item WRI-$\lambda$ - default: $1$, random: log-uniform over $[\expneg{1}{1}, \exppos{5}{1}]$. The penalty weight used when computing $\mathcal{L}$ in the predictor optimization step.
\item WRI-annealing - default: $0$, random: discrete uniform from $\{0, 10\}$. Number of steps before WRI regularization term is included in loss function.
\item WRI-density update freq ($\omega$) - default: $1$, random: discrete uniform from $\{1, 2, 4, 8\}$. Number of predictor optimization steps between density optimization steps.
\item WRI-density learning rate - default: $\expneg{2}{2}$, random: log-uniform over $[\expneg{1}{2}, \expneg{5}{2}]$. Learning rate used for the density estimate optimizer.
\item WRI-density weight decay - default: $\expneg{1}{5}$, random: log-uniform over $[\expneg{1}{6}, \expneg{1}{2}]$. The weight decay used for the density estimate optimizer.
\item WRI-density batch size - default: $256$, random: discrete uniform from $\{128, 256\}$. The batch size used when optimizing for density estimates.
\item WRI-density $\lambda$ - default: $5$, random: log-uniform from $[5, 50]$. The penalty weight used when computing $\mathcal{L}$ in the density optimization step.
\item WRI-density $\beta$ - default: $\exppos{2}{2}$, random: log-uniform from $[\expneg{5}{2}, 5]$. The negative log penalty weight used when computing $\mathcal{L}$ in the density optimization step.
\item WRI-density steps ($n_{d}$) - default: $4$, random: discrete uniform from $\{4, 16, 32\}$. The number of optimization steps taken each time the density estimators are updated.
\item WRI-min density - default: $0.05$, random: uniform from $[0.01, 0.2]$. The minimum density imposed via scaled shifted sigmoid activation on density estimator models.
\item WRI-max density - default: $1$, random: uniform from $[0.4, 2]$. The maxmimum density imposed via scaled shifted sigmoid activation on density estimator models.
\end{itemize}

\subsubsection{DomainBed with additional baselines}
 We run DomainBed experiments on additional baselines to place our work in larger context, expanding Table \ref{table:domainbed_experiments} to Table \ref{table:domainbed_experiments_noncausalbl}. Specifically, we also compare to GroupDRO \citep{sagawa2019distributionally}, Mixup \citep{zhang2017mixup}, MLDG \citep{li2018learning}, and CORAL \citep{sun2016deep}. These are the (current) top-performing methods that are implemented and tested in the DomainBed test suite that can be used with an ERM-trained featurizer---as that is sufficient to learn the invariant features necessary to train an OOD predictor \citep{rosenfeld2022domain}, but also significantly speeds up training. We did not include methods that are not implemented in DomainBed, but it is worth mentioning examples like MIRO \citep{cha2022domain} that demonstrate state-of-the-art performance on image data. Note that all of the aforementioned methods are different lines of work; they are not causally motivated, but can sometimes have better generalization accuracy than methods with a causal basis.

\begin{table}[h]
\centering
\caption{DomainBed results on feature data, with additional non-causal baselines}
\label{table:domainbed_experiments_noncausalbl}
\adjustbox{max width=\textwidth}{%
\begin{tabular}{lcccccc}
\toprule
\textbf{Algorithm} & \textbf{VLCS} & \textbf{PACS} & \textbf{OfficeHome} & \textbf{TerraIncognita} & \textbf{DomainNet} &\textbf{Avg} \\
\midrule
ERM & 76.5 $\pm$ 0.2 & 84.7 $\pm$ 0.1 & 64.5 $\pm$ 0.1 & 51.2 $\pm$ 0.2 & 33.5 $\pm$ 0.1 & 62.0 \\
% IRM & 77.2 $\pm$ 0.3 & 84.6 $\pm$ 0.2 & 64.7 $\pm$ 0.2 & 53.1 $\pm$ 0.4 & 69.9 \\  % pre-annealing
IRM & 76.7 $\pm$ 0.3 & 84.7 $\pm$ 0.3 & 63.8 $\pm$ 0.6 & 52.8 $\pm$ 0.3 & 22.7 $\pm$ 2.8 & 60.1 \\
GroupDRO & 77.0 $\pm$ 0.2 & 84.8 $\pm$ 0.1 & 65.1 $\pm$ 0.1 & 51.3 $\pm$ 0.2 & 30.9 $\pm$ 0.1 & 61.9 \\
Mixup & 76.6 $\pm$ 0.2 & 85.0 $\pm$ 0.1 & 66.1 $\pm$ 0.0 & 52.3 $\pm$ 0.7 & 33.1 $\pm$ 0.1 & 62.6 \\
MLDG & 75.8 $\pm$ 0.2 & 82.3 $\pm$ 0.2 & 64.8 $\pm$ 0.2 & 48.3 $\pm$ 0.2 & 33.0 $\pm$ 0.1 & 60.8 \\
CORAL & 76.5 $\pm$ 0.2 & 85.1 $\pm$ 0.2 & 65.0 $\pm$ 0.1 & 51.8 $\pm$ 0.4 &  33.5 $\pm$ 0.1 & 62.5 \\
% VREx & 76.6 $\pm$ 0.2 & 84.7 $\pm$ 0.3 & 64.6 $\pm$ 0.2 & 52.5 $\pm$ 0.6 & 69.6 \\ % pre-annealing
VREx & 76.7 $\pm$ 0.2 & 84.8 $\pm$ 0.2 & 64.6 $\pm$ 0.2 & 52.2 $\pm$ 0.3 & 26.6 $\pm$ 2.1 & 61.0 \\
\midrule
% WRI & 76.7 $\pm$ 0.2 & 84.9 $\pm$ 0.1 & 64.2 $\pm$ 0.3 & 53.0 $\pm$ 0.2 & 31.8 $\pm$ 0.6 & 62.1 \\
WRI  & 77.0 $\pm$ 0.1 & 85.2 $\pm$ 0.1 & 64.5 $\pm$ 0.2 & 52.7 $\pm$ 0.3 & 32.8 $\pm$ 0.0 & 62.5 \\
\bottomrule
\end{tabular}} \vspace{-0.4em}
\end{table}

\subsubsection{Individual dataset results}
This section contains the individual DomainBed results on VLCS, PACS, OfficeHome, TerraIncognita, and DomainNet. The Average column from each dataset table is reported in Table~\ref{table:domainbed_experiments_noncausalbl}. For all other dataset tables, the column labels indicates which environment was held out for test.

\paragraph{VLCS}

\begin{center}
\adjustbox{max width=\textwidth}{%
\begin{tabular}{lccccc}
\toprule
\textbf{Algorithm}   & \textbf{C}           & \textbf{L}           & \textbf{S}           & \textbf{V}           & \textbf{Avg}         \\
\midrule
ERM                  & 97.2 $\pm$ 0.1       & 66.0 $\pm$ 0.6       & 68.9 $\pm$ 0.5       & 73.8 $\pm$ 0.5       & 76.5                 \\
IRM                  & 97.0 $\pm$ 0.0       & 66.3 $\pm$ 0.5       & 69.1 $\pm$ 0.5       & 74.2 $\pm$ 0.2       & 76.7                 \\
GroupDRO             & 96.9 $\pm$ 0.2       & 67.1 $\pm$ 0.4       & 69.3 $\pm$ 0.2       & 74.6 $\pm$ 0.3       & 77.0                 \\
Mixup                & 97.3 $\pm$ 0.1       & 66.7 $\pm$ 0.4       & 68.7 $\pm$ 0.5       & 73.7 $\pm$ 0.1       & 76.6                 \\
MLDG                 & 97.2 $\pm$ 0.1       & 63.9 $\pm$ 0.6       & 68.6 $\pm$ 0.3       & 73.7 $\pm$ 0.3       & 75.8                 \\
CORAL                & 96.9 $\pm$ 0.2       & 66.3 $\pm$ 0.2       & 68.8 $\pm$ 0.5       & 73.9 $\pm$ 0.3       & 76.5                 \\
VREx                 & 97.2 $\pm$ 0.1       & 66.9 $\pm$ 0.0       & 68.4 $\pm$ 0.5       & 74.2 $\pm$ 0.4       & 76.7                 \\
% WRI                  & 96.9 $\pm$ 0.2       & 66.8 $\pm$ 0.3       & 69.3 $\pm$ 0.4       & 73.9 $\pm$ 0.4       & 76.7                 \\
WRI                  & 97.1 $\pm$ 0.2       & 67.0 $\pm$ 0.2       & 69.6 $\pm$ 0.6       & 74.3 $\pm$ 0.2       & 77.0                 \\
\bottomrule
\end{tabular}}
\end{center}

\paragraph{PACS}

\begin{center}
\adjustbox{max width=\textwidth}{%
\begin{tabular}{lccccc}
\toprule
\textbf{Algorithm}   & \textbf{A}           & \textbf{C}           & \textbf{P}           & \textbf{S}           & \textbf{Avg}         \\
\midrule
ERM                  & 80.5 $\pm$ 0.3       & 81.5 $\pm$ 0.2       & 96.6 $\pm$ 0.1       & 80.2 $\pm$ 0.1       & 84.7                 \\
IRM                  & 80.8 $\pm$ 0.9       & 82.0 $\pm$ 0.3       & 96.3 $\pm$ 0.2       & 79.9 $\pm$ 0.3       & 84.7                 \\
GroupDRO             & 81.4 $\pm$ 0.3       & 81.2 $\pm$ 0.3       & 96.4 $\pm$ 0.2       & 80.2 $\pm$ 0.1       & 84.8                 \\
Mixup                & 80.9 $\pm$ 0.3       & 81.7 $\pm$ 0.3       & 96.6 $\pm$ 0.2       & 80.8 $\pm$ 0.2       & 85.0                 \\
MLDG                 & 80.0 $\pm$ 0.5       & 81.1 $\pm$ 0.4       & 95.7 $\pm$ 0.2       & 72.2 $\pm$ 0.4       & 82.3                 \\
CORAL                & 81.4 $\pm$ 0.2       & 81.8 $\pm$ 0.4       & 96.8 $\pm$ 0.1       & 80.5 $\pm$ 0.2       & 85.1                 \\
VREx                 & 80.6 $\pm$ 0.5       & 81.8 $\pm$ 0.7       & 96.8 $\pm$ 0.1       & 80.0 $\pm$ 0.2       & 84.8                 \\
% WRI                  & 80.4 $\pm$ 0.1       & 82.2 $\pm$ 0.5       & 96.5 $\pm$ 0.1       & 80.6 $\pm$ 0.2       & 84.9                 \\
WRI                  & 81.2 $\pm$ 0.3       & 82.3 $\pm$ 0.1       & 96.5 $\pm$ 0.2       & 80.7 $\pm$ 0.1       & 85.2                 \\
\bottomrule
\end{tabular}}
\end{center}

\paragraph{OfficeHome}

\begin{center}
\adjustbox{max width=\textwidth}{%
\begin{tabular}{lccccc}
\toprule
\textbf{Algorithm}   & \textbf{A}           & \textbf{C}           & \textbf{P}           & \textbf{R}           & \textbf{Avg}         \\
\midrule
ERM                  & 58.9 $\pm$ 0.5       & 50.2 $\pm$ 0.3       & 74.4 $\pm$ 0.2       & 74.5 $\pm$ 0.4       & 64.5                 \\
IRM                  & 57.9 $\pm$ 0.6       & 49.8 $\pm$ 0.5       & 73.4 $\pm$ 0.8       & 74.1 $\pm$ 0.8       & 63.8                 \\
GroupDRO             & 59.7 $\pm$ 0.3       & 50.7 $\pm$ 0.2       & 74.8 $\pm$ 0.1       & 75.4 $\pm$ 0.2       & 65.1                 \\
Mixup                & 61.1 $\pm$ 0.2       & 52.2 $\pm$ 0.1       & 75.6 $\pm$ 0.1       & 75.7 $\pm$ 0.1       & 66.1                 \\
MLDG                 & 59.1 $\pm$ 0.4       & 50.7 $\pm$ 0.2       & 74.8 $\pm$ 0.1       & 74.8 $\pm$ 0.3       & 64.8                 \\
CORAL                & 59.3 $\pm$ 0.2       & 50.6 $\pm$ 0.4       & 74.9 $\pm$ 0.2       & 75.2 $\pm$ 0.2       & 65.0                 \\
VREx                 & 58.8 $\pm$ 0.5       & 50.4 $\pm$ 0.3       & 74.4 $\pm$ 0.2       & 74.8 $\pm$ 0.4       & 64.6                 \\
% WRI                  & 58.7 $\pm$ 0.4       & 49.6 $\pm$ 0.4       & 74.1 $\pm$ 0.3       & 74.4 $\pm$ 0.4       & 64.2                 \\
WRI                  & 58.9 $\pm$ 0.6       & 49.8 $\pm$ 0.2       & 74.7 $\pm$ 0.1       & 74.7 $\pm$ 0.3        & 64.5                 \\
\bottomrule
\end{tabular}}
\end{center}

\paragraph{TerraIncognita}

\begin{center}
\adjustbox{max width=\textwidth}{%
\begin{tabular}{lccccc}
\toprule
\textbf{Algorithm}   & \textbf{L100}        & \textbf{L38}         & \textbf{L43}         & \textbf{L46}         & \textbf{Avg}         \\
\midrule
ERM                  & 50.3 $\pm$ 0.4       & 50.0 $\pm$ 0.6       & 57.8 $\pm$ 0.3       & 46.6 $\pm$ 0.4       & 51.2                 \\
IRM                  & 53.6 $\pm$ 0.7       & 53.2 $\pm$ 0.8       & 57.7 $\pm$ 0.2       & 46.6 $\pm$ 0.8       & 52.8                 \\
GroupDRO             & 50.4 $\pm$ 0.5       & 50.1 $\pm$ 0.3       & 57.4 $\pm$ 0.3       & 47.3 $\pm$ 0.3       & 51.3                 \\
Mixup                & 53.1 $\pm$ 1.8       & 52.4 $\pm$ 0.9       & 57.1 $\pm$ 0.5       & 46.6 $\pm$ 0.3       & 52.3                 \\
MLDG                 & 44.7 $\pm$ 0.6       & 49.3 $\pm$ 0.3       & 57.2 $\pm$ 0.2       & 41.8 $\pm$ 0.3       & 48.3                 \\
CORAL                & 51.0 $\pm$ 0.5       & 51.5 $\pm$ 1.0       & 57.8 $\pm$ 0.2       & 47.0 $\pm$ 0.5       & 51.8                 \\
VREx                 & 52.5 $\pm$ 1.1       & 51.2 $\pm$ 0.6       & 57.9 $\pm$ 0.3       & 47.2 $\pm$ 0.4       & 52.2                 \\
% WRI                  & 54.1 $\pm$ 1.0       & 53.2 $\pm$ 0.6       & 57.1 $\pm$ 0.4       & 47.5 $\pm$ 0.5       & 53.0                 \\
WRI                  & 51.7 $\pm$ 0.5       & 55.0 $\pm$ 0.6       & 57.2 $\pm$ 0.5       & 47.1 $\pm$ 0.3       & 52.7                 \\
\bottomrule
\end{tabular}}
\end{center}

\paragraph{DomainNet}

\begin{center}
\adjustbox{max width=\textwidth}{%
\begin{tabular}{lccccccc}
\toprule
\textbf{Algorithm}   & \textbf{clip}        & \textbf{info}        & \textbf{paint}       & \textbf{quick}       & \textbf{real}        & \textbf{sketch}      & \textbf{Avg}         \\
\midrule
ERM                  & 47.9 $\pm$ 0.2       & 16.3 $\pm$ 0.1       & 40.6 $\pm$ 0.2       & 9.6 $\pm$ 0.2        & 46.7 $\pm$ 0.2       & 40.2 $\pm$ 0.1       & 33.5                 \\
IRM                  & 31.9 $\pm$ 4.4       & 11.4 $\pm$ 1.3       & 28.9 $\pm$ 3.0       & 6.5 $\pm$ 0.7        & 30.8 $\pm$ 3.9       & 26.9 $\pm$ 3.7       & 22.7                 \\
GroupDRO             & 43.8 $\pm$ 0.3       & 15.8 $\pm$ 0.2       & 37.5 $\pm$ 0.3       & 8.8 $\pm$ 0.1        & 42.6 $\pm$ 0.2       & 37.1 $\pm$ 0.3       & 30.9                 \\
Mixup                & 47.2 $\pm$ 0.1       & 16.3 $\pm$ 0.2       & 40.2 $\pm$ 0.2       & 9.7 $\pm$ 0.1        & 45.5 $\pm$ 0.3       & 39.6 $\pm$ 0.2       & 33.1                 \\
MLDG                 & 46.9 $\pm$ 0.2       & 16.2 $\pm$ 0.1       & 40.0 $\pm$ 0.2       & 9.2 $\pm$ 0.1        & 46.2 $\pm$ 0.1       & 39.4 $\pm$ 0.1       & 33.0                 \\
CORAL                & 47.9 $\pm$ 0.3       & 16.5 $\pm$ 0.1       & 40.5 $\pm$ 0.2       & 9.6 $\pm$ 0.1        & 46.6 $\pm$ 0.2       & 39.8 $\pm$ 0.2       & 33.5                 \\
VREx                 & 37.1 $\pm$ 3.2       & 14.0 $\pm$ 0.9       & 31.9 $\pm$ 2.8       & 7.5 $\pm$ 0.7        & 37.2 $\pm$ 2.5       & 32.0 $\pm$ 2.5       & 26.6                 \\
% WRI              & 45.6 $\pm$ 0.9       & 15.8 $\pm$ 0.3       & 39.1 $\pm$ 0.8       & 9.0 $\pm$ 0.3        & 43.5 $\pm$ 0.9       & 38.0 $\pm$ 0.6       & 31.8                 \\
WRI             & 46.7 $\pm$ 0.2            & 15.9 $\pm$ 0.1       & 39.8 $\pm$ 0.1       & 9.3 $\pm$ 0.1        & 46.0 $\pm$ 0.2       & 39.4 $\pm$ 0.1       & 32.8                 \\
\bottomrule
\end{tabular}}
\end{center}

\end{document}